\def\reals{\mathbb{R}}
\def\comp{\raise 1pt \hbox{$\scriptstyle\circ$}}
\def\argmin{\mathop{\rm argmin}\limits}
\def\minimize{\mathop{\rm minimize}\limits}
\def\st{\mathop{\rm subject\ to}}
\def\dom{\mathop{\rm dom}}
\def\gph{\mathop{\rm gph}}
\def\rge{\mathop{\rm rge}}
\def\upto{{\raise 1pt \hbox{$\scriptstyle \,\nearrow\,$}}}
\def\downto{{\raise 1pt \hbox{$\scriptstyle \,\searrow\,$}}}
\def\tos{\rightrightarrows}
\newtheorem{theorem}{Theorem}
\newtheorem{corollary}[theorem]{Corollary}
\newtheorem{definition}[theorem]{Definition}
\newtheorem{example}[theorem]{Example}
\newenvironment{proof}
{\begin{trivlist}\item[\, 
{\bf Proof.}]}{{\hfill $\square$}\end{trivlist}}
\begin{document}
\newcommand{\bolda}{\mbox{$\mathbf{a}$}}
\newcommand{\boldb}{\mbox{$\mathbf{b}$}}
\newcommand{\boldx}{\mbox{$\mathbf{x}$}}
\newcommand{\boldy}{\mbox{$\mathbf{y}$}}
\newcommand{\boldf}{\mbox{$\mathbf{f}$}}
\newcommand{\boldz}{\mbox{$\mathbf{z}$}}
\newcommand{\boldF}{\mbox{$\mathbf{F}$}}
\newcommand{\boldG}{\mbox{$\mathbf{G}$}}
\newcommand{\boldg}{\mbox{$\mathbf{g}$}}
\newcommand{\boldh}{\mbox{$\mathbf{h}$}}
\newcommand{\boldH}{\mbox{$\mathbf{H}$}}
\newcommand{\boldzero}{\mbox{$\mathbf{0}$}}
\newcommand{\Rbb}{\mbox{$\mathbb R$}}

\title{Efficient Evolutionary Algorithm for Single-Objective Bilevel Optimization}

\author{Ankur Sinha\thanks{Ankur Sinha is a researcher at Dept. of Information and Service Economy, Aalto University School of Business, Helsinki, Finland {\tt ankur.sinha@aalto.fi}}, Pekka Malo\thanks{Pekka Malo is an Assistant Professor at Dept. of Information and Service Economy, Aalto University School of Business, Helsinki, Finland {\tt pekka.malo@aalto.fi}}, and Kalyanmoy Deb\thanks{Kalyanmoy Deb is a Professor at Dept. of Electrical and Computer Engineering, Michigan State University, East Lansing, Michigan {\tt kdeb@egr.msu.edu}}}


\maketitle

\begin{abstract}
Bilevel optimization problems are a class of challenging optimization problems, which contain two levels of optimization tasks. In these problems, the optimal solutions to the lower level problem become possible feasible candidates to the upper level problem. Such a requirement makes the optimization problem difficult to solve, and has kept the researchers busy towards devising methodologies, which can efficiently handle the problem. Despite the efforts, there hardly exists any effective methodology, which is capable of handling a complex bilevel problem. In this paper, we introduce bilevel evolutionary algorithm based on quadratic approximations (BLEAQ) of optimal lower level variables with respect to the upper level variables. The approach is capable of handling bilevel problems with different kinds of complexities in relatively smaller number of function evaluations. Ideas from classical optimization have been hybridized with evolutionary methods to generate an efficient optimization algorithm for generic bilevel problems. The performance of the algorithm has been evaluated on two sets of test problems. The first set is a recently proposed SMD test set, which contains problems with controllable complexities, and the second set contains standard test problems collected from the literature. The proposed method has been compared against three benchmarks, and the performance gain is observed to be significant.
\end{abstract}

\begin{keywords}
Bilevel optimization, Evolutionary algorithms, Quadratic approximations.
\end{keywords}

\IEEEdisplaynontitleabstractindextext

\IEEEpeerreviewmaketitle

\section{Introduction}
Bilevel optimization is a branch of optimization, which contains a nested optimization problem within the constraints of the outer optimization problem. The outer optimization task is usually referred as the upper level task, and the nested inner optimization task is referred as the lower level task. The lower level problem appears as a constraint, such that only an optimal solution to the lower level optimization problem is a possible feasible candidate to the upper level optimization problem. Such a requirement makes bilevel optimization problems difficult to handle and have kept researchers and practitioners busy alike. The hierarchical optimization structure may introduce difficulties such as non-convexity and disconnectedness even for simpler instances of bilevel optimization like bilevel linear programming problems. Bilevel linear programming is known to be strongly NP-hard \cite{hansen92}, and it has been proven that merely evaluating a solution for optimality is also a NP-hard task \cite{vicente94}. This gives us an idea about the kind of challenges offered by bilevel problems with complex (non-linear, non-convex, discontinuous etc.) objective and constraint functions.

In the field of classical optimization, a number of studies have been conducted on bilevel programming \cite{colson,vicente-review,dempe-dutta}. Approximate solution techniques are commonly employed to handle bilevel problems with simplifying assumptions like smoothness, linearity or convexity. Some of the classical approaches commonly used to handle bilevel problems include the Karush-Kuhn-Tucker approach \cite{bianco-kkt,bilevel-KKT1}, Branch-and-bound techniques \cite{bard82}, and the use of penalty functions \cite{aiyoshi81}. Most of these solution methodologies are rendered inapplicable, as soon as the bilevel optimization problem becomes complex. Heuristic procedures such as evolutionary algorithms have also been developed for handling bilevel problems with higher levels of complexity \cite{yin-bilevel,GA_Wang}. Most of the existing evolutionary procedures often involve enormous computational expense, which limits their utility to solving bilevel optimization problems with smaller number of variables.

There are a number of practical problems which are bilevel in nature. They are often encountered in transportation (network design, optimal pricing) \cite{migdalas95,constantin95,brotcorne01}, economics (Stackelberg games, principal-agent problem, taxation, policy decisions) \cite{fudenberg93,stackelbergWang01,my-caor13,my-cec13}, management (network facility location, coordination of multi-divisional firms) \cite{sun08,bard83}, engineering (optimal design, optimal chemical equilibria) \cite{kirjnerneto98,smith82} etc \cite{dempe2003,bard98}. Complex practical problems are usually modified into a simpler single level optimization task, which is solved to arrive at a {\em satisficing}\footnote{Satisficing is a portmanteau of two words, satisfy and suffice. A satisficing solution need not be optimal but meets the needs of a decision maker.} instead of an optimal solution. For the complex bilevel problems, classical methods fail due to real world difficulties like non-linearity, discreteness, non-differentiability, non-convexity etc. Evolutionary methods are not very useful either because of their enormous computational expense. Under such a scenario, a hybrid strategy could be solution. Acknowledging the drawbacks associated with the two approaches, we propose a hybrid strategy that utilizes principles from classical optimization within an evolutionary algorithm to quickly approach a bilevel optimum. The proposed method is a bilevel evolutionary algorithm based on quadratic approximations (BLEAQ) of the lower level optimal variables as a function of upper level variables. 

The remainder of the paper is organized as follows. In the next section, we provide a review of the past work on bilevel optimization using evolutionary algorithms, followed by description of a general bilevel optimization problem. Thereafter, we provide a supporting evidence that a strategy based on iterative quadratic approximations of the lower level optimal variables with respect to the upper level variables could be used to converge towards the bilevel optimal solution. This is followed by the description of the methodology that utilizes the proposed quadratic approximation principle within the evolutionary algorithm. The proposed ideas are further supported by results on a number of test problems. Firstly, BLEAQ is evaluated on recently proposed SMD test problems \cite{my-cec12a}, where the method is shown to successfully handle test problems with 10 decision variables. A performance comparison is performed against a nested evolutionary strategy, and the efficiency gain is provided. Secondly, BLEAQ is evaluated on a set of standard test problems chosen from the literature \cite{shimizu81,aiyoshi84,amouzegar99,bard-book98,outrata90,oduguwa-roy,liu98,wang11}. Most of the standard test problems are constrained problems with relatively smaller number of variables. In order to evaluate the performance of BLEAQ, we choose the algorithms proposed in \cite{wang05,wang11}, which are able to successfully solve all the standard test problems  


\section{Past research on Bilevel Optimization using Evolutionary Algorithms}
Evolutionary algorithms for bilevel optimization have been proposed as early as in the 1990s. One of the first evolutionary algorithm for handling bilevel optimization problems was proposed by Mathieu et al. \cite{mathieu}. The proposed algorithm was a nested strategy, where the lower level was handled using a linear programming method, and the upper level was solved using a genetic algorithm (GA). Nested strategies are a popular approach to handle bilevel problems, where for every upper level vector a lower level optimization task is executed. However, they are computationally expensive and not feasible for large scale bilevel problems. Another nested approach was proposed in \cite{yin-bilevel}, where the lower level was handled using the Frank-Wolfe algorithm (reduced gradient method). The algorithm was successful in solving non-convex bilevel optimization problems, and the authors claimed it to be better than the classical methods. In 2005, Oduguwa and Roy \cite{oduguwa-roy} proposed a co-evolutionary approach for finding optimal solution for bilevel optimization problems. Their approach utilizes two populations. The first population handles upper level vectors, and the second population handles lower level vectors. The two populations interact with each other to converge towards the optimal solution. An extension of this study can be found in \cite{legillon12}, where the authors solve a bilevel application problem with linear objectives and constraints.

Wang et al. \cite{wang05} proposed an evolutionary algorithm based on a constraint handling scheme, where they successfully solve a number of standard test problems. Their approach finds better solutions for a number of test problems, as compared to what is reported in the literature. The algorithm is able to handle non-differentiability at the upper level objective function. However, the method may not be able to handle non-differentiability in the constraints, or the lower level objective function. Later on, Wang et al. \cite{wang11} provided an improved algorithm that was able to handle non-differentiable upper level objective function and non-convex lower level problem. The algorithm was shown to perform better than the one proposed in \cite{wang05}. Given the robustness of the two approaches \cite{wang05,wang11} in handling a variety of standard test problems, we choose these methods as benchmarks. Another evolutionary algorithm proposed in \cite{li06} utilizes particle swarm optimization to solve bilevel problems. Even this approach is nested as it solves the lower level optimization problem for each upper level vector. The authors show that the approach is able to handle a number of standard test problems with smaller number of variables. However, they do not report the computational expense of the nested procedure. A hybrid approach proposed in \cite{li07b}, which is also nested, utilizes simplex-based crossover strategy at the upper level, and solves the lower level using one of the classical approaches. This method successfully solves a number of standard test problems. However, given the nested nature of the algorithm it is not scalable for large number of variables. The authors report the number of generations and population sizes required by the algorithm that may be used to compute the function evaluations at the upper level, but they do not explicitly report the total number of function evaluations required at the lower level. Other studies where authors have relied on a nested strategy include \cite{my-caor13,angelo13}. In both of these studies an evolutionary algorithm has been used at both levels to handle bilevel problems.

Researchers in the field of evolutionary algorithms have also tried to convert the bilevel optimization problem into a single level optimization problem using the Karush-Kuhn-Tucker (KKT) conditions \cite{GA_Wang,li07a,li11}. However, such conversions are possible only for those bilevel problems, where the lower level is smooth and the KKT conditions can be easily produced. 
Recently, there has also been an interest in multi-objective bilevel optimization using evolutionary algorithms. Some of the studies in the direction of solving multi-objective bilevel optimization problems using evolutionary algorithms are \cite{halter-sanaz,shi-xia,my-ecj10,my-ifac09,ruuska12,zhang12}.

\section{Single-Objective Bilevel Problem}
Bilevel optimization is a nested optimization problem that involves two levels of optimization tasks. The structure of a bilevel optimization problem demands that the optimal solutions to the lower level optimization problem may only be considered as feasible candidates for the upper level optimization problem. 
The problem contains two classes of variables: the upper level variables $x_u\in X_U\subset \reals^n$, and the lower level variables $x_l\in X_L\subset \reals^m$. For the lower level problem, the optimization task is performed with respect to variables $x_l$, and the variables $x_u$ act as parameters. A different $x_u$ leads to a different lower level optimization problem, whose optimal solution needs to be determined. The upper level problem usually involves  all variables $x=(x_u,x_l)$, and the optimization is expected to be performed with respect to both sets of variables. In the following we provide two equivalent definitions of a bilevel optimization problem: 
\begin{definition}\label{def:bilevel1}
For the upper-level objective function $F:\reals^n\times\reals^m \to\reals$ and lower-level objective function $f:\reals^n\times\reals^m \to\reals$ 
\begin{align*}
\minimize_{x_u\in X_U,x_l\in X_L}\quad & F_0(x_u,x_l) \\
\st\quad  & x_l\in \argmin 
	\lbrace
		f_0(x_u,x_l):f_j(x_u,x_l)\leq 0, \\
		& \hspace{32mm} \quad \quad j=1,\dots,J
	\rbrace\\
 & F_k(x_u,x_l)\leq 0, k=1,\dots,K
\end{align*}
\end{definition}

The above definition can be stated in terms of set-valued mappings as follows:

\begin{definition}\label{def:bilevel2}
Let $\Psi:\reals^n\tos\reals^m$ be a set-valued mapping, 
$$
\Psi(x_u)=\argmin\{f_0(x_u,x_l):f_j(x_u,x_l)\leq 0, j=1,\dots,J\},
$$
which represents the constraint defined by the lower-level optimization problem, i.e. $\Psi(x_u)\subset X_L$ for every $x_u\in X_U$. Then the bilevel optimization problem can be expressed as a general constrained optimization problem: 
\begin{align*}
\minimize_{x_u\in X_U, x_l\in X_L}\quad & F_0(x_u,x_l) \\
\st\quad  & x_l \in \Psi(x_u) \\
 & F_k(x_u,x_l)\leq 0, k=1,\dots,K
\end{align*}
where $\Psi$ can be interpreted as a parameterized range-constraint for the lower-level decision $x_l$. 
\end{definition}

The graph of the feasible-decision mapping is interpreted as a subset of $X_U\times X_L$
$$
\gph\Psi=\{(x_u,x_l)\in X_U\times X_L \ | \ x_l\in \Psi(x_u)\},
$$
which displays the connections between the upper-level decisions and corresponding optimal lower-level decisions. The domain of $\Psi$, which is obtained as a projection of $\gph\Psi$ on the upper-level decision space $X_U$ represents all the points $x_u\in X_U$, where the lower-level problem has at least one optimal solution, i.e. 
$$
\dom\Psi=\{x_u|\Psi(x_u)\neq \emptyset\}.
$$
Similarly, the range is given by
$$
\rge\Psi=\{x_l|x_l\in\Psi(x_u) \ \text{for some $x_u$}\},
$$
which corresponds to the projection of $\gph\Psi$ on the lower-level decision space $X_L$.

Figure~\ref{fig:explain2} describes the structure of a bilevel problem in terms of two components: (i) $\gph\Psi$, which gives the graph of the lower level decision-mapping $\Psi$ as a subset of $X_U\times X_L$; and (ii) the plot of $F_0$ evaluated on $\gph\Psi$, which shows the upper level objective function with respect to upper level variables $x_u$, when the lower level is optimal $x_l\in\Psi(x_u)$.
\begin{figure}[t]
\begin{center}
\epsfig{file=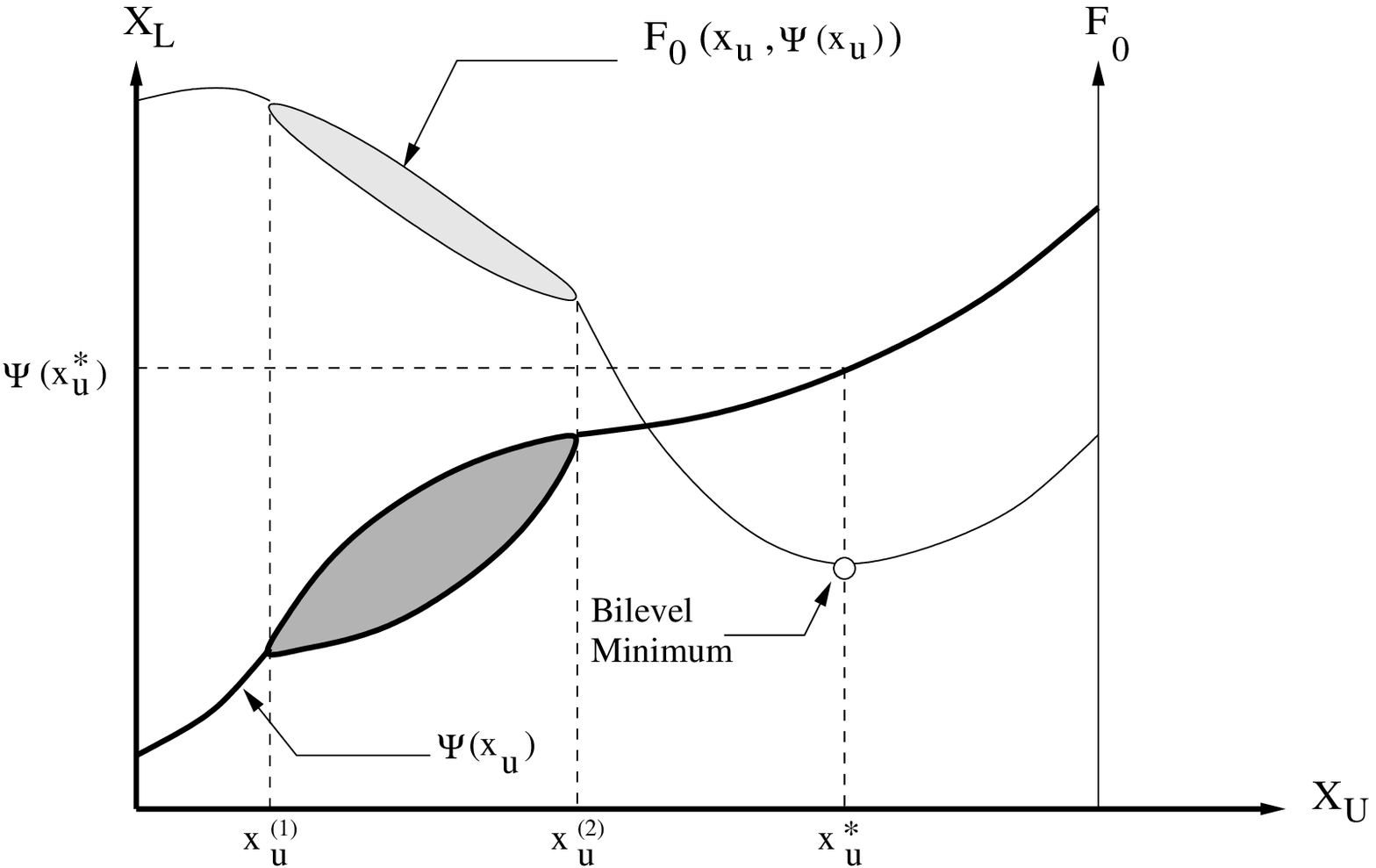,width=0.9\linewidth}
\caption{The $\Psi$-mapping for $x_u$ and optimal $x_l$, and upper level objective function with respect to $x_{u}$ when $x_{l}$ is optimal.}
\label{fig:explain2}
\end{center}
\end{figure}
The shaded area of $\gph\Psi$ shows the regions where there are multiple lower level optimal vectors corresponding to any upper level vector. On the other hand, the non-shaded parts of the graph represent the regions where $\Psi$ is a single-valued mapping, i.e. there is a single lower level optimal vector corresponding to any upper level vector. Considering $\gph\Psi$ as the domain of $F_0$ in the figure, we can interpret $F_0$ entirely as a function of $x_u$, i.e. $F_0(x_u,\Psi(x_u))$. Therefore, whenever $\Psi$ is multi-valued, we can also see a shaded region in the plot of $F_0$ which shows the different upper level function values for any upper level vector with multiple lower level optimal solutions. For instance, in Figure~\ref{fig:explain2}, the shaded region of $\gph\Psi$ corresponds to the shaded region of $F_0$ for the upper level vectors between $x_{u}^{1}$ and $x_{u}^{2}$.

For a more detailed illustration, see the 3-dimensional graph in Figure~\ref{fig:explain4}, where the values of the upper and lower level objective functions $F_0$ and $f_0$ are plotted against the decision space $X_U\times X_L$. 
\begin{figure}[t]
\begin{center}
\epsfig{file=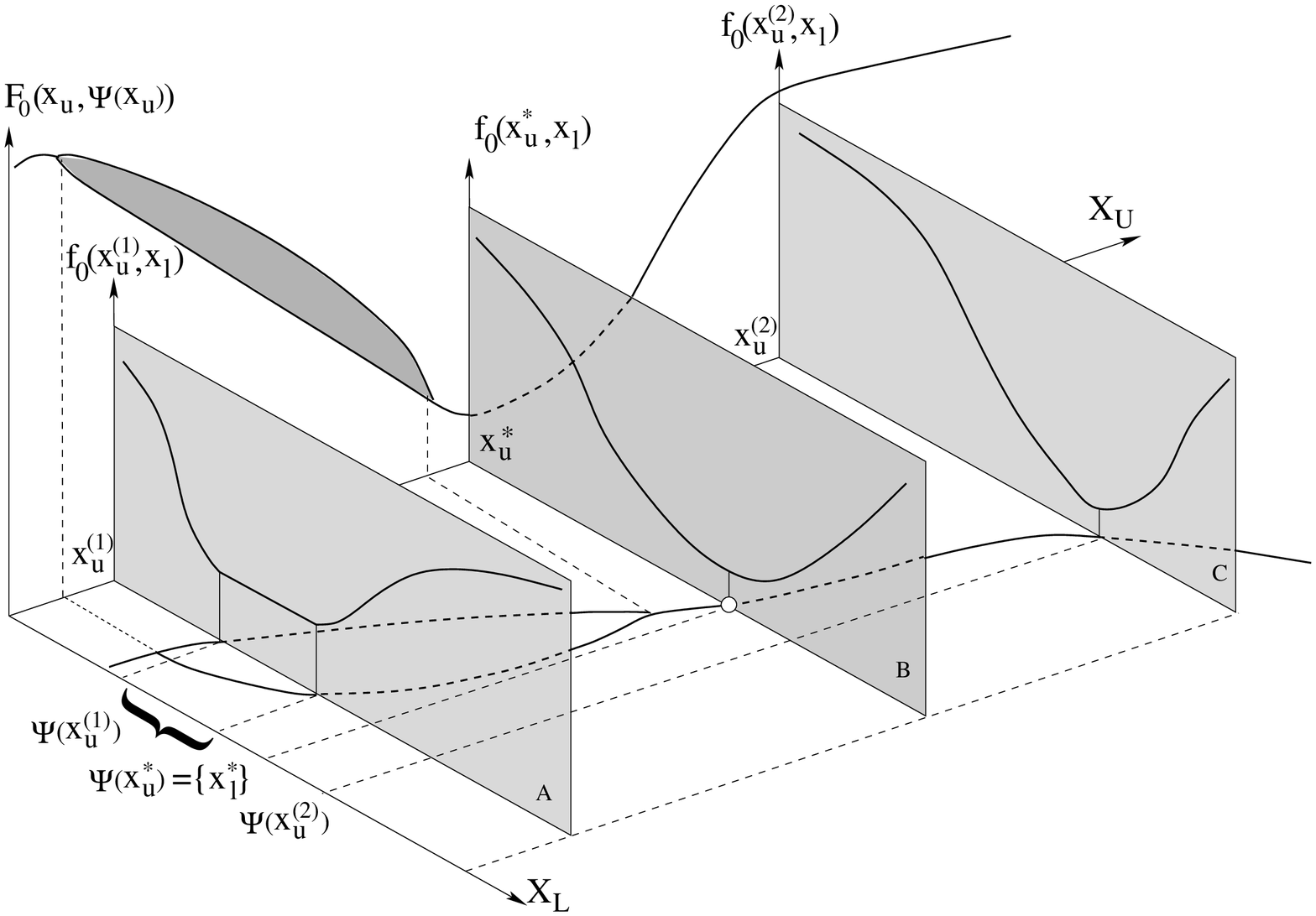,width=0.9\linewidth}
\caption{Graphical representation of a simple bilevel optimization problem.}
\label{fig:explain4}
\vspace{-1mm}
\end{center}
\end{figure}

For simplicity, let us again assume that $\gph\Psi$ is the domain of $F_0$. If we now consider the values of $F_0$ plotted against the upper level decision variables, we obtain a similar information as before. However, as an addition to the previous figure, we have also described how the lower level objective function $f_0$ depends on the upper level decisions. In the figure, the shaded planes marked as A, B and C represent three different lower level optimization problems parameterized by $x_{u}^{(1)}$, $x_{u}^{\star}$ and $x_{u}^{(2)}$, respectively. Once an upper-level decision vector has been fixed, the lower level objective can be interpreted entirely as a function of $x_l$. Hence each shaded plane shows a single-variable plot of $f_0$ against $X_L$ given a fixed $x_u$. 
Consider, for example, the plane A corresponding to the upper level decision $x_{u}^{(1)}$. From the shape of the lower level objective function it is easy to see that there are multiple optimal solutions at the lower level. Therefore, $\Psi$ must also be set-valued at this point, and the collection of optimal lower level solutions is given by $\Psi(x_{u}^{(1)})$. For the other two shaded planes B and C, there is only a single lower level optimal solution for the given $x_u$, which corresponds to $\Psi$ being single-valued at these points. The optimal upper level solution is indicated by point $(x_{u}^{\star},x_{l}^{\star})$, where $\Psi(x_u^{\star})=\{x_{l}^{\star}\}$.

\begin{example}\label{example1}
To provide a further insight into bilevel optimization, we consider a simple example with non-differentiable objective functions at upper and lower levels. The problem has a single upper and lower level variable, and does not contain any constraints at either of the two levels.
\begin{align*}
\minimize_{x_u,x_l}\quad & F_0(x_u,x_l) = |x_u| + x_l - 1\\
\st\quad  & x_l\in \argmin_{x_l} 
	\lbrace
		f_0(x_u,x_l) = (x_{u})^2 + |x_l - e^{x_u}|
	\rbrace\\
\end{align*}
\end{example}
For a given upper level variable $x_u$, the optimum of the lower level problem in the above example is given by $x_l = e^{x_u}$. Therefore, this problem has a single-valued $\Psi$ mapping, such that $\Psi(x_u) = \{e^{x_u}\}$. The $\Psi$ mapping reduces this problem to a simple single-level optimization problem as a function of $x_u$. The optimal solution to the bilevel optimization problem is given by $(x_{u}^{\star},x_{l}^{\star})=(0,1)$. In this example, the lower level optimization problem is non-differentiable at the optimum for any given $x_u$, and the upper level objective function is non-differentiable at the bilevel optimum. Even though the problem involves simple objective functions at both levels, most of the KKT-based methods will face difficulties in handling such a problem. It is noteworthy that even though the objective functions at both levels are non-differentiable, the $\Psi$-mapping is continuous and smooth. For complex examples having disconnected or non-differentiable $\Psi$-mapping one can refer to \cite{dempe02}.

\section{Localization of the Lower Level Problem}
Ideally, the analysis of a bilevel problem would be greatly simplified if the optimal solution mapping $\Psi$ could be treated as if it were an ordinary function. In particular, for the design of an efficient bilevel algorithm, it would be valuable to identify the circumstances under which single-valued functions can be used to construct local approximations for the optimal solution mapping. Given that our study of solution mappings is closely related to sensitivity analysis in parametric optimization, there already exists considerable research on the regularity properties of solution mappings, and especially on the conditions for obtaining single-valued localizations of general set-valued mappings. To formalize the notions of localization and single-valuedness in the context of set-valued mappings, we have adopted the following definition from Dontchev and Rockafellar~\cite{dontchev09}:

\begin{definition}[Localization and single-valuedness]
For a given set-valued mapping $\Psi:X_U\tos X_L$ and a pair $(x_u,x_l)\in\gph{\Psi}$, a {\it graphical localization} of $\Psi$ at $x_u$ for $x_l$ is a set-valued mapping $\Psi_{\rm loc}$ such that 
$$
\gph\Psi_{\rm loc}=(U\times L)\cap\gph \Psi
$$
for some upper-level neighborhood $U$ of $x_u$ and lower-level neighborhood $L$ of $x_l$, i.e. 
\[
\Psi_{\rm loc}(x_u)=\begin{cases}
\Psi(x_u)\cap L & \text{for $x_u\in U$,}\\
\emptyset & \text{otherwise.}
\end{cases}
\]
If $\Psi_{\rm loc}$ is actually a function with domain $U$, it is indicated by referring to a single-valued localization $\psi_{\rm loc}:U\to X_L$ around $x_u$ for $x_l$. 
\end{definition}

Obviously, graphical localizations of the above type can be defined for any lower level decision mapping. However, it is more interesting to ask when is the localization not only a single-valued function but also possesses convenient properties such as continuity and certain degree of smoothness. In this section, our plan is to study how the lower-level solution mappings behave under small perturbations, and clarify the regularity conditions in which the solution mapping can be locally characterized by a Lipschitz-continuous single-valued function. We begin the discussion from simple problems with convex set-constraints in section~\ref{sec:convex-constraints} and gradually extend the results to problems with general non-linear constraints in section~\ref{sec:nonlinear-constraints}.  

\begin{figure}[t]
\begin{center}
\epsfig{file=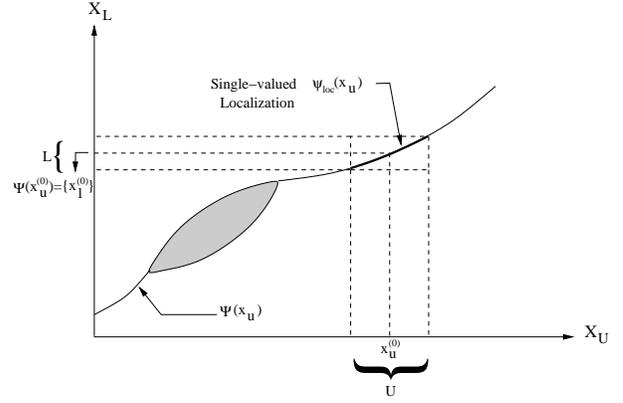,width=0.9\linewidth}
\caption{Localization around $x_{u}^{(0)}$ for $x_{l}^{(0)}$.}
\label{fig:explain0}
\end{center}
\end{figure}

As an example, Figure \ref{fig:explain0} shows the $\Psi$ mapping and its localization $\psi_{\rm loc}$ around $x_{u}^{(0)}$ for $x_l^{(0)}$. The notations discussed in the previous definition are shown in the figure to develop a graphical insight for the theory discussed above.

\subsection{Localization with Convex Constraints}\label{sec:convex-constraints}

To motivate the development, suppose that the lower level problem is of the form
\begin{equation}\label{problem1}
\minimize f_0(x_u,x_l) \text{ over all $x_l \in C$},
\end{equation}
where lower-level variables are restricted by a simple set-constraint $C$ which is assumed to be non-empty, convex, and closed in $X_L$. 

When the lower-level objective function $f_0$ is continuously differentiable, the necessary condition for $x_l$ to be a local minimum for the lower level problem is given by the standard variational inequality
\begin{equation}\label{inequality1}
\nabla_{l}f_0(x_u,x_l)+N_C(x_l)\ni 0,
\end{equation}
where 
\begin{equation}
N_C(x_l)=\{v | \langle v, x_l'-x_l\rangle\leq 0, \text{ for all } x'_l\in C\}
\end{equation}
is the normal cone to $C$ at $x_l$ and $\nabla_lf_0$ denotes the gradient of $f_0$ with respect to the lower-level decision vector. The solutions to the inequality are referred to as {\it stationary points} with respect to minimizing over $C$, regardless of whether or not they correspond to local or global minimum. Of course, in the special case when $f_0$ is also convex, the inequality yields a sufficient condition for $x_l$ to be a global minimum, but in other cases the condition is not sufficient to guarantee lower-level optimality of $x_l$. Rather the inequality is better interpreted as a tool for identifying quasi-solutions, which can be augmented with additional criteria to ensure optimality. In particular, as discussed by Dontchev and Rockafellar~\cite{dontchev09}, significant efforts have been done to develop tools that help to understand the behavior of solutions under small perturbations which we are planning to utilize while proposing our solution for solving the bilevel problem. With this purpose in mind, we introduce the following definition of a quasi-solution mapping for the lower level problem:
\begin{definition}[Quasi-solution mapping]
 The solution candidates (stationary points) to the lower level problem of form~\eqref{problem1} can be identified by set-valued mapping $\Psi^{\star}:X_U\tos X_L$,
$$
\Psi^{\star}(x_u)=\{x_l | \nabla_{l}f_0(x_u,x_l)+N_C(x_l)\ni 0\},
$$
which represents the set of stationary lower-level decisions for the given upper-level decision $x_u$. When $f_0$ is convex, $\Psi^{\star}$ coincides to the optimal solution mapping, i.e. $\Psi^{\star}=\Psi$.
\end{definition}

Whereas direct localization of $\Psi$ is difficult, it is easier to obtain a well-behaved localization for the quasi-solution mapping first, and then establish the conditions under which the obtained solutions furnish a lower-level local minimum. The approach is motivated by the fact that for variational inequalities of the above type, there are several variants of implicit function theorem that can be readily applied to obtain localizations with desired properties. Below, we present two localization-theorems for quasi-solution mappings. The first theorem shows the circumstances in which there exists a single-valued localization of $\Psi^{\star}$ such that it is Lipschitz-continuous around a given pair $(x_u^{\star},x_l^{\star})\in\gph \Psi^{\star}$. The second theorem elaborates the result further under the additional assumption that $C$ is polyhedral, which is sufficient to guarantee that for all points in the neighborhood of $x_u^{\star}$ there is a strong local minimum in the lower level problem~\eqref{problem1}. 

\begin{definition}[Lipschitz]
A single-valued localization $\psi_{\rm loc}:X_U\to X_L$ is said to be Lipschitz continuous around an upper-level decision $\bar{x}_u$ when there exists a neighborhood $U$ of $\bar{x}_u$ and a constant $\gamma\geq 0$ such that
$$
|\psi_{\rm loc}(x_u)-\psi_{\rm loc}(x_u')|\leq \gamma|x_u-x_u'| \quad \text{for all $x_u,x_u'\in U$}.
$$
\end{definition}

\begin{theorem}[Localization of quasi-solution mapping]\label{th1}
Suppose in the lower-level optimization problem~\eqref{problem1}, with $x_l^{\star}\in \Psi^{\star}(x_u^{\star})$, that 
\begin{itemize}
\item[(i)] $C$ is non-empty, convex, and closed in $X_L$, and
\item[(ii)] $f_0$ is twice continuously differentiable with respect to $x_l$, and has a strong convexity property at $(x_u^{\star},x_l^{\star})$, i.e. there exists $\gamma>0$ such that 
$$
\langle \nabla_{ll}f_0(x_u^{\star},x_l^{\star})w,w \rangle \geq \gamma |w|^2, \quad \text{for all $w\in C-C$.}
$$
\end{itemize}
Then the quasi-solution mapping $\Psi^{\star}$ has a Lipschitz continuous single-valued localization $\psi$ around $x_u^{\star}$ for $x_l^{\star}$. 
\end{theorem}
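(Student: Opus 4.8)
The plan is to recast the stationarity condition as a parameterized generalized equation and then invoke the implicit-function-theorem machinery for variational inequalities of Dontchev and Rockafellar~\cite{dontchev09}. Writing the defining relation of $\Psi^{\star}$ as
\[
0\in \nabla_{l}f_0(x_u,x_l)+N_C(x_l),
\]
I would view this as a variational inequality over the convex set $C$, parameterized by the upper-level decision $x_u$. Solving it for $x_l$ near $x_l^{\star}$ as $x_u$ ranges near $x_u^{\star}$ is precisely an implicit-function question for generalized equations, and the decisive property to verify at the reference pair $(x_u^{\star},x_l^{\star})$ is \emph{strong regularity} in Robinson's sense.

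The first concrete step is to promote the pointwise curvature hypothesis (ii) into a local, uniform one. Since $f_0$ is twice continuously differentiable, $\nabla_{ll}f_0$ is continuous, so there exist a neighborhood $N$ of $(x_u^{\star},x_l^{\star})$ and a constant $\gamma'\in(0,\gamma)$ with $\langle \nabla_{ll}f_0(x_u,x_l)w,w\rangle\ge \gamma'|w|^2$ for all $w\in C-C$ and all $(x_u,x_l)\in N$. Integrating the Hessian along segments in $C$, which remain in $C$ by convexity, then shows that $x_l\mapsto \nabla_{l}f_0(x_u,x_l)$ is strongly monotone on $C$ restricted to this neighborhood, uniformly for $x_u$ near $x_u^{\star}$. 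Because $C$ is convex, $N_C$ is a monotone operator, so the sum $\nabla_{l}f_0(x_u,\cdot)+N_C$ is locally strongly monotone with modulus $\gamma'$.

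From strong monotonicity I would read off two of the three conclusions directly. Uniqueness is immediate, since two distinct stationary points in the neighborhood would violate strong monotonicity, so the localization is single-valued. For the Lipschitz estimate I would compare the variational inequalities at two parameters $x_u,x_u'$ with solutions $x_l=\psi(x_u)$ and $x_l'=\psi(x_u')$; testing each inequality with the other solution and adding yields $\langle \nabla_{l}f_0(x_u',x_l')-\nabla_{l}f_0(x_u,x_l),\,x_l-x_l'\rangle\ge 0$. Inserting the intermediate term $\nabla_{l}f_0(x_u',x_l)$, applying strong monotonicity to one part and the local bound $|\nabla_{l}f_0(x_u',x_l)-\nabla_{l}f_0(x_u,x_l)|\le \kappa|x_u-x_u'|$ (with $\kappa$ an upper bound for $|\nabla_{ul}f_0|$ on $N$) to the other, gives $|\psi(x_u)-\psi(x_u')|\le (\kappa/\gamma')|x_u-x_u'|$, which is the asserted Lipschitz continuity.

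The main obstacle is \emph{existence}: strong monotonicity alone secures at most one solution and its Lipschitz dependence, but not that a stationary point persists for every $x_u$ in a full neighborhood $U$ of $x_u^{\star}$. This is where the theory of~\cite{dontchev09} does the real work. The uniform strong monotonicity established above makes the linearized variational inequality $\nabla_{l}f_0(x_u^{\star},x_l^{\star})+\nabla_{ll}f_0(x_u^{\star},x_l^{\star})(z-x_l^{\star})+N_C(z)\ni v$ solvable in $z$ with Lipschitz dependence on the canonical perturbation $v$, that is, the generalized equation is strongly regular at $(x_u^{\star},x_l^{\star})$. Joint continuous differentiability of $\nabla_{l}f_0$, which follows from $f_0\in C^2$, then lets me apply their implicit function theorem for generalized equations to produce a genuine single-valued localization $\psi:U\to X_L$ with $\psi(x_u^{\star})=x_l^{\star}$, whose Lipschitz modulus agrees with the one computed above. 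Combining existence, single-valuedness, and the Lipschitz bound completes the argument.
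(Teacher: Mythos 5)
Your proposal is correct and follows essentially the same route as the paper, whose proof is a one-line citation of Proposition 2G.4, Exercise 2G.5, and Theorem 2G.2 in Dontchev and Rockafellar~\cite{dontchev09}: you have simply unpacked those results, verifying local uniform strong monotonicity of $\nabla_l f_0(x_u,\cdot)+N_C$ from the Hessian condition and then invoking the same implicit-function machinery for generalized equations to get existence, single-valuedness, and the Lipschitz bound.
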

\begin{proof}
When the lower-level objective function $f_0$ is twice continuously differentiable with the inequality $\langle \nabla_{ll}f_0(x_u^{\star},x_l^{\star})w,w \rangle \geq \gamma |w|^2$ holding for all $w\in C-C$, then by Proposition 2G.4 and Exercise 2G.5 in~\cite{dontchev09} the assumptions (a) and (b) in Theorem 2G.2 are satisfied, and this gives the rest.
\end{proof}

\begin{corollary}[Localization with polyhedral constraints]\label{th2}
Suppose that in the setup of Theorem~\eqref{th1} $C$ is polyhedral. Then the additional conclusion holds that, for all $x_u$ in some neighborhood of $x_u^{\star}$, there is a strong local minimum at $x_l=\psi(x_u)$, i.e. for some $\varepsilon>0$
$$
f_0(x_u,x'_l)\geq f_0(x_u,x_l)+\frac{\varepsilon}{2}|x'_l-x_l|^2 \quad\text{for all $x'_l\in C$ near $x_l$.}
$$
\end{corollary}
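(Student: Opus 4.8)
The plan is to exploit the localization already furnished by Theorem~\ref{th1}: for $x_u$ near $x_u^\star$ the point $x_l=\psi(x_u)$ is a stationary point of the lower-level problem lying near $x_l^\star$, so by definition of $\Psi^\star$ it satisfies the variational inequality $-\nabla_l f_0(x_u,x_l)\in N_C(x_l)$. It therefore suffices to show that any such stationary point enjoys the quadratic-growth estimate. I would assemble this from three ingredients: a second-order Taylor expansion of $f_0(x_u,\cdot)$; nonnegativity of the resulting first-order term, which is exactly the normal-cone inequality; and a uniform version of the strong convexity hypothesis (ii) of Theorem~\ref{th1}, valid throughout a neighborhood rather than only at $(x_u^\star,x_l^\star)$.

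First I would upgrade the pointwise estimate $\langle\nabla_{ll}f_0(x_u^\star,x_l^\star)w,w\rangle\ge\gamma|w|^2$ on $C-C$ to a neighborhood. The key observation is that, because the inequality is positively homogeneous of degree two in $w$ and $C-C$ is convex and symmetric, the cone it generates is the subspace $L=\mathrm{span}(C-C)$; hence the hypothesis is equivalent to positive definiteness of $\nabla_{ll}f_0(x_u^\star,x_l^\star)$ on $L$. Restricting $w$ to the unit sphere of $L$, which is compact, and using continuity of $\nabla_{ll}f_0$ (from the $C^2$ assumption), I can find a neighborhood $W$ of $(x_u^\star,x_l^\star)$ and $\varepsilon\in(0,\gamma]$ so that $\langle\nabla_{ll}f_0(x_u,x_l)w,w\rangle\ge\varepsilon|w|^2$ for all $w\in C-C$ and all $(x_u,x_l)\in W$. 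Since $\psi$ is continuous with $\psi(x_u^\star)=x_l^\star$, the pairs $(x_u,\psi(x_u))$ remain in $W$ for $x_u$ close to $x_u^\star$.

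Then I would fix such an $x_u$, set $x_l=\psi(x_u)$, and take any feasible $x_l'\in C$ near $x_l$, writing $d=x_l'-x_l$. Convexity of $C$ gives $x_l+td\in C$ for $t\in[0,1]$ and $d\in C-C$, so the integral form of Taylor's theorem yields $f_0(x_u,x_l')=f_0(x_u,x_l)+\langle\nabla_l f_0(x_u,x_l),d\rangle+\int_0^1(1-t)\langle\nabla_{ll}f_0(x_u,x_l+td)d,d\rangle\,dt$. For $x_l'$ close enough to $x_l$ the whole segment stays in $W$, so the second-order integrand is at least $\varepsilon|d|^2$ and the remainder is at least $\tfrac{\varepsilon}{2}|d|^2$; the first-order term is nonnegative because $-\nabla_l f_0(x_u,x_l)\in N_C(x_l)$ forces $\langle\nabla_l f_0(x_u,x_l),x_l'-x_l\rangle\ge0$. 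Combining gives $f_0(x_u,x_l')\ge f_0(x_u,x_l)+\tfrac{\varepsilon}{2}|x_l'-x_l|^2$, which is the claim.

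The step demanding the most care is the uniformization of the second-order condition, since $C-C$ need not be bounded; the resolution is the homogeneity/subspace reduction noted above, after which compactness and continuity finish the job. I would note that this direct argument in fact uses only convexity and closedness of $C$, not polyhedrality. The polyhedral hypothesis is what lets one instead quote the sharper variational-analysis machinery of Dontchev and Rockafellar (in the same vein as the proof of Theorem~\ref{th1}), where for polyhedral $C$ strong regularity of the solution mapping is known to coincide with the strong second-order sufficient condition and hence to deliver the strong local minimum; carrying that assumption here keeps the corollary aligned with their stated results.
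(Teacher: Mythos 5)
Your argument is correct, but it takes a genuinely different route from the paper: the paper's entire proof of this corollary is a citation to Exercise 2G.5 and Theorem 2G.3 of Dontchev and Rockafellar, which obtain the strong local minimum from the variational-analysis machinery for variational inequalities over polyhedral convex sets (reduction to the critical cone, strong regularity, etc.). You instead give a direct, self-contained proof: since $\psi$ is a localization of $\Psi^{\star}$, each $\psi(x_u)$ satisfies the normal-cone inequality, and the quadratic growth then follows from the Taylor expansion with integral remainder once the strong-convexity bound is uniformized over a neighborhood. Your key technical step --- observing that $C-C$ is convex and symmetric, so the cone it generates is the subspace $\mathrm{span}(C-C)$, which reduces the uniformization to a compactness argument on the unit sphere of that subspace --- is sound and is exactly what is needed to handle the unboundedness of $C-C$. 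Your concluding remark is also accurate and worth emphasizing: because hypothesis (ii) of Theorem~\ref{th1} demands positive definiteness on all of $C-C$ rather than merely on a critical cone, your argument never uses polyhedrality, so it proves a slightly more general statement than the corollary as stated; the polyhedral hypothesis is only needed if one wants to invoke the weaker second-order conditions of the cited reference. The one point to make explicit is that your uniformization requires $\nabla_{ll}f_0$ to be continuous \emph{jointly} in $(x_u,x_l)$, which is a natural (and surely intended) reading of the smoothness hypothesis but slightly stronger than the literal phrase ``twice continuously differentiable with respect to $x_l$.''
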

\begin{proof}
See Exercise 2G.5 and Theorem 2G.3 in~\cite{dontchev09}.
\end{proof}

It is worthwhile to note that second order conditions to ensure optimality of a quasi-solution also exist for other than polyhedral sets, but the conditions are generally not available in a convenient form.

\subsection{Localization with Nonlinear Constraints}\label{sec:nonlinear-constraints}

Until now, we have considered the simple class of lower level problems where the constraint set is not allowed to depend on the upper-level decisions. In practice, however, the lower level constraints are often dictated by the upper level choices. Therefore, the above discussion needs to be extended to cover the case of general non-linear constraints that are allowed to be functions of both $x_l$ and $x_u$. Fortunately, this can be done by drawing upon the results available for constrained parametric optimization problems. 

Consider the lower level problem of the form
\begin{align*}
\minimize_{x_l\in X_L}\quad & f_0(x_u,x_l) \\
\st\quad   & f_j(x_u,x_l)\leq 0, j=1,\dots,J
\end{align*}
where the functions $f_0,f_1,\dots,f_J$ are assumed to be twice continuously differentiable. Let $L(x_u,x_l,y)=f_0(x_u,x_l)+y_1f_1(x_u,x_l)+\cdots+y_Jf_J(x_u,x_l)$ denote the Lagrangian function. Then the necessary first-order optimality condition is given by the following variational inequality
\[
f(x_u,x_l,y)+N_E(x_l,y)\ni(0,0), 
\]
where 
\[
\begin{cases}
f(x_u,x_l,y)=(\nabla_l L(x_u,x_l,y), -\nabla_y L(x_u,x_l,y)), & \\
E=X_L\times \reals_+^J 
\end{cases}
\]
The pairs $(x_l,y)$ which solve the variational inequality are called the {\it Karush-Kuhn-Tucker} pairs corresponding to the upper-level decision $x_u$. Now in the similar fashion as done in Section~\ref{sec:convex-constraints}, our interest is to establish the conditions for the existence of a mapping $\psi_{\rm loc}$ which can be used to capture the behavior of the $x_l$ component of the Karush-Kuhn-Tucker pairs as a function of the upper-level decisions $x_u$. 

\begin{theorem}[Localization with Nonlinear Constraints]\label{th:nonlinear}
For a given upper-level decision $x_u^{\star}$, let $(x^{\star}_l,y^{\star})$ denote a corresponding Karush-Kuhn-Tucker pair. Suppose that the above problem for twice continuously differentiable functions $f_i$ is such that the following regularity conditions hold
\begin{itemize}
\item[(i)] the gradients $\nabla_l f_i(x_u,x_l)$, $i\in I$ are linearly independent, and
\item[(ii)] $\langle w, \nabla_{ll}^2 L(x_u^{\star},x_l^{\star},y^{\star})w\rangle >0$ for every $w\neq 0$, $w\in M^+$,
\end{itemize}
where 
\[
\begin{cases}
 I=\{i\in[1,J] \ | \ f_i(x_u^{\star},x_l^{\star})=0\}, \\
 M^+=\{w\in \reals^n \ | \ w \perp\nabla_l f_i(x_u^{\star},x_l^{\star}) \text{ for all $i\in I$}\}.
\end{cases}
\]
Then the Karush-Kuhn-Tucker mapping $S:X_U\tos X_L\times\reals^J$,
$$
S(x_u):=\{(x_l,y) \ | \ f(x_u,x_l,y)+N_E(x_l,y)\ni(0,0)\},
$$
has a Lipschitz-continuous single-valued localization $s$ around $x_u^{\star}$ for $(x_l^{\star},y^{\star})$, $s(x_u)=(\psi_{\rm loc}(x_u),y(x_u))$, where $\psi_{\rm loc}:X_U\to X_L$ and $y: X_U\to \reals^J$. Moreover, for every $x_u$ in some neighborhood of $x_u^{\star}$ the lower-level decision component $x_l=\psi_{\rm loc}(x_u)$ gives a strong local minimum.
\end{theorem}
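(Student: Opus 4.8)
The plan is to recognize the Karush--Kuhn--Tucker system, written as the variational inequality $f(x_u,x_l,y)+N_E(x_l,y)\ni(0,0)$, as a \emph{parametrized generalized equation} in the unknown $z=(x_l,y)$ with parameter $x_u$, posed over the polyhedral convex set $E=X_L\times\reals_+^J$. From this viewpoint the existence of a Lipschitz single-valued localization of $S$ is precisely the conclusion of the implicit-function theorem for variational inequalities of Dontchev and Rockafellar, which is the nonlinear-programming specialization of Robinson's strong-regularity theorem. The overall strategy therefore mirrors the convex case of Theorem~\ref{th1}: rather than verifying strong monotonicity of a single gradient field, I would verify that the \emph{linearization} of $f$ at the base point yields a strongly regular variational inequality, and then quote the implicit-function machinery to transfer that regularity to the full perturbed system.

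First I would form the partial linearization of $z\mapsto f(x_u^\star,z)$ at $z^\star=(x_l^\star,y^\star)$. Since $f=(\nabla_lL,-\nabla_yL)$, its Jacobian in $z$ is the symmetric KKT matrix assembled from $\nabla_{ll}^2L(x_u^\star,x_l^\star,y^\star)$ and the active gradients $\nabla_lf_i(x_u^\star,x_l^\star)$, and the associated linearized generalized equation is exactly the optimality system of a quadratic program with Hessian $\nabla_{ll}^2L$ and these gradients as constraint rows. The key step is then to show that assumptions (i) and (ii) are exactly what makes this linearized variational inequality strongly regular: the linear independence of $\{\nabla_lf_i(x_u^\star,x_l^\star)\}_{i\in I}$ in~(i) is the constraint qualification that forces uniqueness of the multiplier $y^\star$, while positive definiteness of $\nabla_{ll}^2L$ on $M^+$ in~(ii) is the strong second-order sufficient condition; together these are the known characterization (Dontchev--Rockafellar) of strong regularity of the KKT map of a nonlinear program.

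With strong regularity of the linearized system in hand, I would invoke the implicit-function theorem for variational inequalities to conclude that $S$ has a single-valued localization $s(x_u)=(\psi_{\rm loc}(x_u),y(x_u))$ around $x_u^\star$ for $(x_l^\star,y^\star)$ which is Lipschitz continuous, using that twice continuous differentiability of the $f_i$ makes $f$ continuously differentiable so that the theorem's perturbation estimates apply. For the strong-local-minimum claim I would argue that LICQ and the strong second-order sufficient condition persist under small perturbations of $x_u$, because the Hessian, the active gradients, and the computed multipliers $y(x_u)$ all vary continuously in $x_u$ through $s$; hence at each nearby $x_u$ the same second-order sufficiency certifies quadratic growth of $f_0(x_u,\cdot)$ at $x_l=\psi_{\rm loc}(x_u)$, i.e. a strong local minimum.

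The hard part is the second step: showing that (i) together with (ii) actually deliver strong regularity of the linearized variational inequality. This is the genuine analytic content, since one must establish that the linearized KKT operator has a single-valued, globally Lipschitz inverse \emph{despite} the sign constraints coming from $N_{\reals_+^J}$, i.e. that the solution of the linearized quadratic program depends single-valuedly and Lipschitz-continuously on canonical right-hand-side perturbations. Everything else---the reformulation, the transfer through the implicit-function theorem, and the persistence of second-order sufficiency---is comparatively routine once this regularity is secured, and for brevity I would establish the regularity by citing the corresponding theorem of Dontchev and Rockafellar rather than reproducing its proof.
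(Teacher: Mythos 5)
Your proposal is correct and takes essentially the same route as the paper: the paper's entire proof is a citation to Theorem 2G.9 of Dontchev--Rockafellar (equivalently Theorem 1 of Dempe--Vogel), which is precisely the implicit-function theorem for the KKT variational inequality under LICQ and the second-order condition that you describe, and you likewise defer the strong-regularity core to that same reference. Your expansion of why the citation applies (linearized generalized equation over $E=X_L\times\reals_+^J$, strong regularity of the linearized KKT system, persistence of second-order sufficiency for the strong-local-minimum claim) is a faithful unpacking of the cited result rather than a different argument.
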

\begin{proof}
See e.g. Theorem 1 in~\cite{dempe97} or Theorem 2G.9 in~\cite{dontchev09}.
\end{proof}

Despite the more complicated conditions required to establish the result for general non-linear constraints case, the eventual outcome of the Theorem~\eqref{th:nonlinear} is essentially similar to Theorem~\eqref{th1} and Theorem~\eqref{th2}. That is, in the vicinity of the current optimal solution, we can express the locally optimal lower level decisions as a relatively smooth function of the upper-level decisions.

\subsection{Approximation with Localizations}

The single-valued localizations of solution mappings can be used as effective tools for alleviating computational burden in a greedy evolutionary algorithm. Instead of solving the lower level problem from scratch for every upper-level decision, we can use localizations to obtain an ``educated guess'' on the new optimal lower-level decision. The intuition for using the above results is as follows.

Suppose that $(x_u,x_l)\in\gph\Psi$ is a known lower-level optimal pair, i.e. $x_l\in\Psi(x_u)$, and the lower-level problem satisfies the regularity conditions in the previous theorems. Then for some open neighborhoods $U\subset X_U$ of $x_u$ and $L\subset X_L$ of $x_l$, there exists a uniquely determined $\gamma$-Lipschitz continuous function $\psi_{\rm loc}:U\to X_L$ such that $x'_l=\psi_{\rm loc}(x'_u)$ is the unique local optimal solution of the lower level problem in $L$ for each $x'_u\in U$. The existence of such $\psi_{\rm loc}$ leads to a direct strategy for generating new solution candidates from the existing ones. If the currently known upper level decision $x_u$ is perturbed by a small random vector $\varepsilon$ such that $x_u+\varepsilon\in U$, then the newly generated point $(x_u+\varepsilon,\psi_{\rm loc}(x_u+\varepsilon))$ gives another locally optimal solution where the change in the lower-level optimal decision vector is bounded by $|\psi_{\rm loc}(x_u+\varepsilon)-x_l|\leq \gamma|\varepsilon|$. In theory, this would allow us to reduce the bilevel optimization problem~\eqref{def:bilevel2} to that of minimizing a locally Lipschitz function $F_0(x_u,\psi_{\rm loc}(x_u))$; see e.g. \cite{dempe09} for discussion on implicit function based techniques.

However, an essential difficulty for applying the result in practice follows from the fact that the mapping $\psi_{\rm loc}$ is not known with certainty, except for a few special cases. To resolve this problem in an efficient way, we consider embedding the above results  within an evolutionary framework, where estimates of $\psi_{\rm loc}$ are produced by using samples of currently known lower level optimal points. For simplicity, suppose that we want to find an estimate of $\psi_{\rm loc}$ around current best solution $(x_u,x_l)\in\gph\Psi$, and let
$$
\mathcal{P}=\{(x_u^{(i)},x_l^{(i)})\in X_U\times X_L \ | \ x_l^{(i)}\in\Psi(x_u^{(i)}), i\in \mathcal{I}\}\subset\gph\Psi
$$ 
be a sample from the neighborhood of $(x_u,x_l)$. Then the task of finding a good estimator for $\psi_{\rm loc}$ can be viewed as an ordinary supervised learning problem:
\begin{definition} [Learning of $\psi_{\rm loc}$]
Let $\mathcal{H}$ be the hypothesis space, i.e. the set of functions that can be used to predict the optimal lower-level decision from the given upper-level decision. Given the sample $\mathcal{P}$, our goal is to choose a model $\hat{\psi}\in\mathcal{H}$ such that it minimizes the empirical error on the sample data-set, i.e.
\begin{equation}\label{eq:empirical}
\hat{\psi}=\argmin_{h \in \mathcal{H}}\sum_{i\in \mathcal{I}} L(h(x_u^{(i)}),x_l^{(i)}),
\end{equation}
where $L:X_L\times X_L\to\reals$ denotes the empirical risk function. 
\end{definition}

For a graphical illustration, see Figure \ref{fig:explain1} showing one example of a local approximation $\hat{\psi}$ around $x_{u}^{(0)}$ for $x_l$ using a quadratic function. When the exact form of the underlying mapping is unknown, the approximation generally leads to an error as one moves away from the point around which the localization is performed. In the figure, the approximation error is shown for a point $x_{u}^{(1)}$ in the neighborhood of $x_{u}^{(0)}$. This error may not be significant in the vicinity of $x_{u}^{(0)}$, and could provide a good guess for the lower-level optimal solution for a new $x_{u}$ close to $x_{u}^{(0)}$.

\begin{figure}[t]
\begin{center}
\epsfig{file=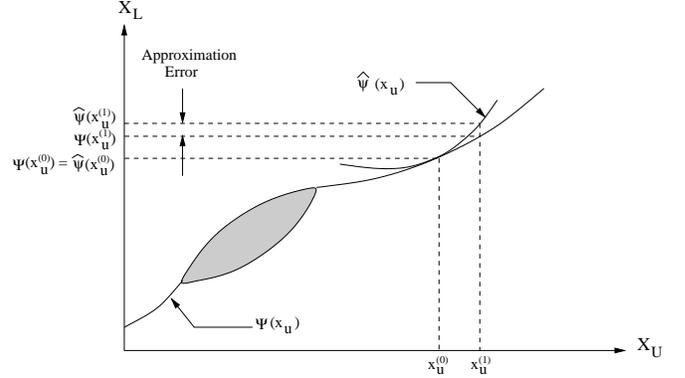,width=0.98\linewidth}
\caption{Approximation with localization around $x_{u}^{(0)}$.}
\label{fig:explain1}
\end{center}
\end{figure}
In this paper, we have chosen to use the squared prediction error as the empirical risk function when performing the approximations, i.e.
$$
L(h(x_u^{(i)}),x_l^{(i)})=|x_l^{(i)}-h(x_u^{(i)})|^2,
$$
 and at the same time we have restricted the hypothesis space $\mathcal{H}$ to consist of second-order polynomials. With these choices the empirical risk minimization problem~\eqref{eq:empirical} corresponds to an ordinary quadratic regression problem. Therefore, as long as the estimation problem is kept light enough and the evolutionary framework is such that the solution population can be used to construct a training dataset $\mathcal{P}$, the use of the estimation approach can be expected to enhance the algorithm's overall performance by reducing the number of times the lower-level optimization problem needs to be solved.

\section{Algorithm description}\label{sec:algorithm}
In this section, we provide a description for the bilevel evolutionary algorithm based on quadratic approximations (BLEAQ). The optimization strategy is based on approximation of the lower level optimal variables as a function of upper level variables. To begin with, an initial population of upper level members with random upper level variables is initialized. For each member, the lower level optimization problem is solved using a lower level optimization scheme, and optimal lower level members are noted. Based on the lower level optimal solutions achieved, a quadratic relationship between the upper level variables and each lower level optimal variable is established. If the approximation is good (in terms of mean squared error) then it can be used to predict the optimal lower level variables for any given set of upper level variables. This eliminates the requirement to solve a lower level optimization problem. However, one needs to be cautious while accepting the optimal solutions from the quadratic approximations as even a single poor solution might lead to a wrong bilevel optimum. At each generation of the algorithm, a new quadratic approximation is generated which goes on improving as the population converges towards the true optimum. At the termination of the procedure, the algorithm not only provides the optimal solutions to a bilevel problem, but also acceptably accurate functions representing the relationship between upper and lower variables at the optimum. These functions can be useful for strategy development or decision making in bilevel problems appearing in fields like game theory, economics, science and engineering. Below we provide a step-by-step procedure for the algorithm.

\begin{description}
\item[S. 1] Initialization: The algorithm starts with a random population of size $N$, which is initialized by generating the required number of upper level variables, and then executing the lower level evolutionary optimization procedure to determine the corresponding optimal lower level variables. Fitness is assigned based on upper level function value and constraints.

\item[S. 2] Tagging: Tag all the upper level members as 1 which have undergone a lower level optimization run, and others as 0.

\item[S. 3] Selection of upper level parents: Given the current population, choose the best tag 1 member as one of the parents\footnote{The choice of best tag 1 member as a parent makes the algorithm faster. However, for better exploration at upper level some other strategy may also be used.}. 
Randomly choose $2(\mu-1)$ members from the population and perform a tournament selection based on upper level function value to choose remaining $\mu-1$ parents.

\item[S. 4] Evolution at the upper level: Make the best tag 1 member as the index parent, and the members from the previous step as other parents. Then, create $\lambda$ offsprings from the chosen $\mu$ parents, using crossover and polynomial mutation operators.

\item[S. 5] Quadratic approximation: If the number of tag 1 members in the population is greater than $\frac{(dim(x_{u})+1)(dim(x_{u})+2)}{2} + dim(x_{u})$, then select all the tag 1 upper level members to construct quadratic functions\footnote{Please note that a quadratic fit in $d$ dimensions requires at least $\frac{(d+1)(d+2)}{2}$ points. However, to avoid overfitting we use at least $\frac{(d+1)(d+2)}{2}+d$ points.} to represent each of the lower level optimal variables as a function of upper level variables. If the number of tag 1 members is less than $\frac{(dim(x_{u})+1)(dim(x_{u})+2)}{2} + dim(x_{u})$ then a quadratic approximation is not performed.

\item[S. 6] Lower level optimum: If a quadratic approximation was performed in the previous step, find the lower level optimum for the offsprings using the quadratic approximation. If the mean squared error $e_{mse}$ is less than $e_{0}$(1e-3), the quadratic approximation is considered good and the offsprings are tagged as 1, otherwise they are tagged as 0. If a quadratic approximation was not performed in the previous step, execute lower level optimization runs for each of the offsprings. To execute lower level optimization for an offspring member, the closest tag 1 parent is determined. From the closest tag 1 parent, the lower level optimal member ($x_{l}^{(c)}$) is copied (Refer Section \ref{sec:closest}). Thereafter, a lower level optimization run is performed to optimize the problem using a quadratic programming approach with($x_{l}^{(c)}$) as a starting point. If the optimization is unsuccessful, use an evolutionary optimization algorithm to solve the problem. The copied lower level member, ($x_{l}^{(c)}$) is used as a population member in the lower level evolutionary optimization run. Tag the offspring as 1 for which lower level optimization is performed.

\item[S. 7] Population Update: After finding the lower level variables for the offsprings, $r$ members are chosen from the parent population. A pool of chosen $r$ members and $\lambda$ offsprings is formed. The best $r$ members from the pool replace the chosen $r$ members from the population. A termination check is performed. If the termination check is false, the algorithm moves to the next generation (Step 3).
\end{description}

\subsection{Property of two close upper level members}\label{sec:closest}
For two close upper level members, it is often expected that the lower level optimal solutions will also lie close to each other. This scenario is explained in Figure \ref{fig:closest}, where $x_{u}^{(1)}$ and $x_{u}^{(2)}$ are close to each other, and therefore their corresponding optimal lower level members are also close. On the other hand $x_{u}^{(3)}$ is far away from $x_{u}^{(1)}$ and $x_{u}^{(2)}$. Therefore, its optimal lower level member is not necessarily close to the other two lower level members. This property of the bilevel problems is utilized in the proposed algorithm. If a lower level optimization has been performed for one of the upper level members, then the corresponding lower level member is utilized while performing a lower level optimization task for another upper level member in the vicinity of the previous member.

\begin{figure}[t]
\begin{center}
\epsfig{file=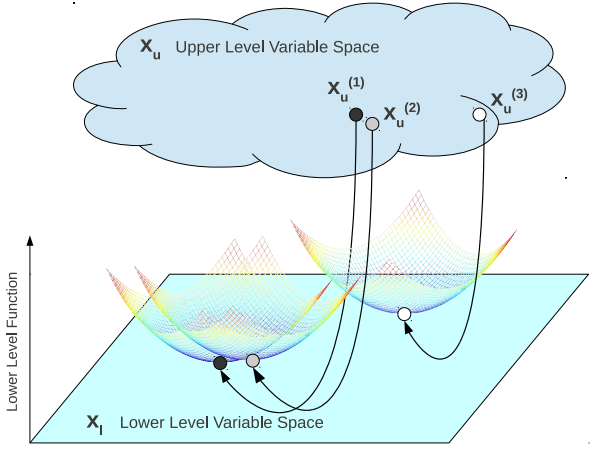,width=0.98\linewidth}
\caption{Lower level optimal solutions for three different upper level members.}
\label{fig:closest}
\end{center}
\end{figure}

\subsection{Lower Level Optimization}
At the lower level we first use a quadratic programming approach to find the optimum. If the procedure is unsuccessful we use a global optimization procedure using an evolutionary algorithm to find the optimum. The lower level optimization using evolutionary algorithm is able to handle complex optimization problems with multimodality. The fitness assignment at this level is performed based on lower level function value and constraints. The steps for the lower level optimization procedure are as follows:
\vspace{1mm}
\subsubsection{Lower level quadratic programming}
\begin{description}
\item[S. 1] Create  $\frac{(dim(x_{l})+1)(dim(x_{l})+2)}{2} + dim(x_{l})$ lower level points about $x_{l}^{(c)}$ using polynomial mutation.
\item[S. 2] Construct a quadratic approximation for lower level objective function about $x_{l}^{(c)}$ using the created points. Construct linear approximations for the lower level constraints.
\item[S. 3] Optimize the quadratic function with linear constraints using a sequentially quadratic programming approach.
\item[S. 4] Compute the value of the optimum using the quadratic approximated objective function and the true lower level objective function. If the absolute difference is less than $\delta_{min}$ and the point is feasible with respect to the constraints, accept the solution as lower level optimum, otherwise perform an evolutionary optimization search.
\end{description}

\vspace{1mm}
\subsubsection{Lower level evolutionary optimization}
\begin{description}
\item[S. 1]  If lower level evolutionary optimization is executed directly without quadratic programming, then randomly initialize $n$ lower level members. If quadratic programming is already executed but unsuccessful, then use the solution obtained using quadratic programming as one of the population members and randomly initialize other $n-1$ lower level members. The upper level variables are kept fixed for all the population members.
\item[S. 2] Choose $2\mu$ members randomly from the population, and perform a tournament selection to choose $\mu$ parents for crossover.
\item[S. 3] The best parent among $\mu$ parents is chosen as the index parent and $\lambda$ number of offsprings are produced using the crossover and mutation operators.
\item[S. 4] A population update is performed by choosing $r$ random members from the population. A pool is formed using $r$ chosen members and $\lambda$ offsprings, from which the best $r$ members are used to replace the $r$ chosen members from the population.
\item[S. 5] Next generation (Step 2) is executed if the termination criteria is not satisfied.
\end{description}

\subsection{Constraint handling}
As we know by now that a bilevel problem has two levels of optimization tasks. There may be constraints at both levels. We modify any given bilevel problem such that lower level constraints belong only to the lower level optimization task. However, at the upper level we include both upper and lower level constraints. This is done to ensure that a solution which is not feasible at the lower level cannot be feasible at the upper level, no matter whether the lower level optimization task is performed or not. While computing the overall constraint violation for a solution at the upper level, it is not taken into account whether the lower level variables are optimal or not. We use a separate tagging scheme in the algorithm to account for optimality or non-optimality of lower level variables.

The algorithm uses similar constraint handling scheme at both levels, where the overall constraint violation for any solution is the summation of the violations of all the equality and inequality constraints. A solution $x^{(i)}$ is said to `constraint dominate' \cite{debpenalty} a solution $x^{(j)}$ if any of the following conditions are true:

\begin{enumerate}
	\item Solution $x^{(i)}$ is feasible and solution $x^{(j)}$ is not.
	\item Solution $x^{(i)}$ and $x^{(j)}$ are both infeasible but solution $x^{(i)}$ has a smaller overall constraint violation.
	\item Solution $x^{(i)}$ and $x^{(j)}$ are both feasible but the objective value of $x^{(i)}$ is less than that of $x^{j}$.
\end{enumerate}


\subsection{Crossover Operator}
The crossover operator used in Step 2 is similar to the PCX operator proposed in \cite{my-cec06}. The operator creates a new solution from 3 parents as follows:
\begin{equation}
c = x^{(p)} + \omega_{\xi}d + \omega_{\eta}\frac{p^{(2)}-p^{(1)}}{2}
\label{eq:child}
\end{equation}

The terms used in the above equation are defined as follows:
\begin{itemize}
	\item $x^{(p)}$ is the {\em index\/} parent
	\item $d=x^{(p)}-g$, where $g$ is the mean of $\mu$ parents
	\item $p^{(1)}$ and $p^{(2)}$ are the other two parents
	\item $\omega_{\xi}=0.1$ and $\omega_{\eta}=\frac{dim(x^{(p)})}{||x^{(p)}-g||_{1}}$ are the two parameters.
\end{itemize}
The two parameters $\omega_{\xi}$ and $\omega_{\eta}$, describe the extent of variations along the respective directions. At the upper level, a crossover is performed only with the upper level variables and the lower level variables are determined from the quadratic function or by lower level optimization call. At the lower level, crossover is performed only with the lower level variables and the upper level variables are kept fixed as parameters.

\subsection{Termination Criteria}
The algorithm uses a variance based termination criteria at both levels. At the upper level, when the value of $\alpha_u$ described in the following equation becomes less than $\alpha_{u}^{stop}$, the algorithm terminates.
\begin{equation}
\begin{array}{l}
	\alpha_u = \sum_{i=1}^{n} \frac{\sigma^2(x_{u_{T}}^{i})}{\sigma^2(x_{u_{0}}^i)},
\end{array}
\end{equation}
where $n$ is the number of upper level variables in the bilevel optimization problem, $x_{u_T}^i : \mbox{i} \in \{1,2,\ldots,n\}$ represents the upper level variables in generation number $T$, and $x_{u_0}^i : \mbox{i} \in \{1,2,\ldots,n\}$ represents the upper level variables in the initial random population. The value of $\alpha_{u}^{stop}$ should be kept small to ensure a high accuracy.
Note that $\alpha_u$ is always greater than 0 and should be less than 1 most of the times.

A similar termination scheme is used for the lower level evolutionary algorithm. The value for $\alpha_l$ is given by:
\begin{equation}
\begin{array}{l}
	\alpha_l = \sum_{i=1}^{m} \frac{\sigma^2(x_{l_{t}}^{i})}{\sigma^2(x_{l_{0}}^i)},
\end{array}
\end{equation}
where $m$ is the number of variables at the lower level, $x_{l_t}^i : \mbox{i} \in \{1,2,\ldots,m\}$ represents the lower level variables in generation number $t$, and $x_{l_0}^i : \mbox{i} \in \{1,2,\ldots,m\}$ represents the lower level variables in the initial random population for a particular lower level optimization run.
A high accuracy is desired particularly at the lower level, therefore the value for $\alpha_l$ should be kept low. Inaccurate lower level solutions may mislead the algorithm in case of a conflict between the two levels.

\subsection{Parameters and Platform}
The parameters in the algorithm are fixed as $\mu=3$, $\lambda=2$ and $r=2$ at both levels. Crossover probability is fixed at $0.9$ and the mutation probability is $0.1$. The upper level population size $N$ and the lower level population size $n$ are fixed at 50 for all the problems. The values for $\alpha_{u}^{stop}$ and $\alpha_{l}^{stop}$ are fixed as $1e-5$ at both upper and the lower levels.

The code for the algorithm is written in MATLAB, and all the computations have been performed on a machine with 64 bit UNIX kernel, 2.6GHz quad-core Intel Core i7 processor and 8GB of 1600MHz DDR3 RAM.

\subsection{Example}
To demonstrate the working of the algorithm we use a simple bilevel optimization problem introduced in Example \ref{example1}. The steps of the BLEAQ procedure on this example are illustrated in Figure \ref{fig:exampleProblem}. The figure shows the steps S1 to S7 with the help of four part figures PF1, PF2, PF3 and PF4. The algorithm starts with a random population of five upper level members that are shown with white circles in PF1. The figure also shows the actual $\Psi$-mapping that is unknown at the start of the algorithm. For each member $x_u$, lower level optimization is executed to get the corresponding optimal $x_l$. This provides the $(x_u,x_l)$ pair shown with gray circles. The members are tagged 1 if the lower level optimization is successful. This is followed by PF2 showing selection and evolution of the upper level members leading to $\lambda=2$ offsprings from $\mu=3$ parents. The offsprings are shown with white squares. Thereafter, PF3 shows the quadratic approximation of the $\Psi$-mapping. At this stage, the lower level optimal solutions corresponding to the offsprings are no longer generated from lower level optimization rather the approximate $\Psi$-mapping is used to obtain approximate lower level solutions. The $(x_u,x_l)$ pairs for the offsprings are shown using gray squares. Since the offspring pairs are generated from a poor approximation, we tag the offspring members as 0. Next, in PF4 a population update is performed based on upper level fitness that leads to replacement of poor fitness members with better members. 
From this example, we observe that an approximate mapping helps in focusing the search in the better regions at the upper level by avoiding frequent lower level optimization runs. Tagging is important so that the algorithm does not loose all the tag 1 members. At any generation, the elite member is always the best tag 1 member, because the tag 0 members are known to be non-optimal at the lower level.
\begin{figure*}[t]
\begin{center}
\epsfig{file=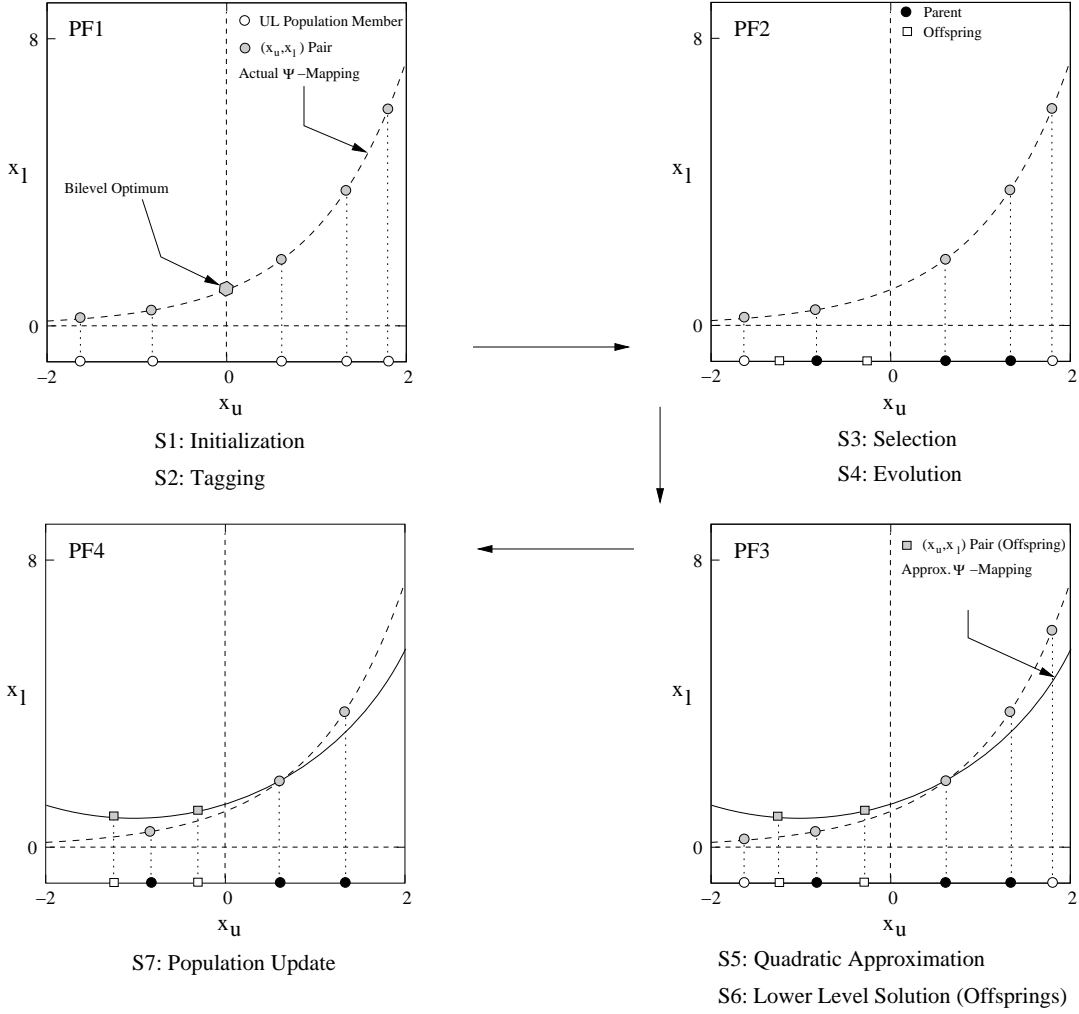,width=0.8\linewidth}
\caption{Graphical illustration of the steps followed by BLEAQ on Example \ref{example1}.}
\label{fig:exampleProblem}
\end{center}
\end{figure*}


\section{SMD problems}
A set of test problems (SMD) for single objective bilevel optimization has recently been proposed in \cite{my-cec12a}. The proposed problems are scalable in terms of the number of decision variables. In this section, we discuss the construction of these test problems in brief and provide a description of the proposed SMD test problems. The construction of the test problem involves splitting the upper and lower level functions into three components. Each of the components is specialized for inducing a certain kind of difficulty into the bilevel test problem. The functions are chosen based on the difficulties required in terms of convergence at the two levels, and interaction between the two levels. A generic bilevel test problem is stated as follows:

\begin{equation}
\begin{array}{l}
F_0(x_u,x_l) = F_0^1(x_{u1}) + F_0^2(x_{l1}) + F_0^3(x_{u2},x_{l2}) \\
f_0(x_u,x_l) = f_0^1(x_{u1}, x_{u2}) + f_0^2(x_{l1}) + f_0^3(x_{u2},x_{l2})\\

\mbox{where}\\
     \quad \quad x_u = (x_{u1}, x_{u2}) \quad \mbox{and} \quad x_l = (x_{l1}, x_{l2})
\end{array}
\end{equation}

\begin{table*}
\caption{Overview of test-problem framework components}\label{tab:framework}
\begin{minipage}{1.0\linewidth}
\begin{footnotesize}
\begin{center}
\begin{tabular}{|c|c|c|c|}
\multicolumn{4}{c}{Panel A: Decomposition of decision variables}\\
\hline
\multicolumn{2}{|c|}{Upper-level variables} & \multicolumn{2}{|c|}{Lower-level variables} \\
\hline
Vector & Purpose & Vector & Purpose \\
\hline\hline
$x_{u1}$ & Complexity on upper-level & $x_{l1}$ & Complexity on lower-level \\
$x_{u2}$ & Interaction with lower-level & $x_{l2}$ & Interaction with upper-level \\
\hline

\multicolumn{4}{c}{}\\

\multicolumn{4}{c}{Panel B: Decomposition of objective functions}\\
\hline
\multicolumn{2}{|c|}{Upper-level objective function} & \multicolumn{2}{|c|}{Lower-level objective function} \\
\hline
Component & Purpose & Component & Purpose \\
\hline\hline
 $F_0^1(x_{u1})$& Difficulty in convergence & $f_0^1(x_{u1}, x_{u2})$ & Functional dependence \\
 $F_0^2(x_{l1})$& Conflict / co-operation & $f_0^2(x_{l1})$ & Difficulty in convergence\\
 $F_0^3(x_{u2},x_{l2})$& Difficulty in interaction & $f_0^3(x_{u2},x_{l2})$ & Difficulty in interaction \\
\hline
\end{tabular}
\end{center}
\end{footnotesize}
\end{minipage}
\end{table*}

The above equations contain three terms at both levels. Table~\ref{tab:framework} provides a summary for the role played by each term in the equations. The upper level and lower level variables are broken into two smaller vectors (refer Panel A in Table~\ref{tab:framework}), such that, vectors $x_{u1}$ and $x_{l1}$ are used to induce complexities at the upper and lower levels independently, and vectors $x_{u2}$ and $x_{l2}$ are responsible to induce complexities because of interaction. The upper and lower level functions are decomposed such that each of the components is specialized for a certain purpose only (refer Panel B in Table~\ref{tab:framework}). The term $F_1(x_{u1})$, at the upper level, is responsible for inducing difficulty in convergence solely at the upper level, and the term $f_2(x_{l1})$, at the lower level, is responsible for inducing difficulty in convergence solely at the lower level. The term $F_2(x_{l1})$ decides if there is a conflict or a cooperation between the two levels. The terms $F_3(x_{l2}, x_{u2})$ and $f_3(x_{l2}, x_{u2})$ induce difficulties because of interaction at the two levels, though $F_3(x_{l2}, x_{u2})$ may also be used to induce a cooperation or a conflict. Finally, $f_1(x_{u1}, x_{u1})$ is a fixed term at the lower level which does not induce any difficulties, rather helps to create a functional dependence between lower level optimal solutions and the upper level variables.

\subsection{SMD1}
SMD1 is a test problem with cooperation between the two levels. The lower level optimization problem is a simple convex optimization task. The upper level is convex with respect to upper level variables and optimal lower level variables.
\begin{equation}
\begin{array}{l}
F_0^1 = \sum_{i=1}^{p} (x_{u1}^{i})^2\\
F_0^2 = \sum_{i=1}^{q} (x_{l1}^{i})^2\\
F_0^3 = \sum_{i=1}^{r} (x_{u2}^{i})^2 + \sum_{i=1}^{r} (x_{u2}^{i} - \tan x_{l2}^{i})^2\\
f_0^1 = \sum_{i=1}^{p} (x_{u1}^{i})^2\\
f_0^2 = \sum_{i=1}^{q} (x_{l1}^{i})^2\\
f_0^3 = \sum_{i=1}^{r} (x_{u2}^{i} - \tan x_{l2}^{i})^2\\
\end{array}
\end{equation}
The range of variables is as follows,
\begin{equation}
\begin{array}{l}
x_{u1}^{i} \in [-5,10], \hspace{2mm} \forall \hspace{2mm} i \in \{1,2,\ldots,p\}\\
x_{u2}^{i} \in [-5,10], \hspace{2mm} \forall \hspace{2mm} i \in \{1,2,\ldots,r\}\\
x_{l1}^{i} \in [-5,10], \hspace{2mm} \forall \hspace{2mm} i \in \{1,2,\ldots,q\}\\
x_{l2}^{i} \in (\frac{-\pi}{2},\frac{\pi}{2}), \hspace{2mm} \forall \hspace{2mm} i \in \{1,2,\ldots,r\}
\end{array}
\end{equation}
Relationship between upper level variables and lower level optimal variables is given as follows,
\begin{equation}
\begin{array}{l}
x_{l1}^{i} = 0, \hspace{2mm} \forall \hspace{2mm} i \in \{1,2,\ldots,p\}\\
x_{l2}^{i} = \tan^{-1} x_{u2}^{i}, \hspace{2mm} \forall \hspace{2mm} i \in \{1,2,\ldots,r\}
\end{array}
\end{equation}
The values of the variables at the optimum are $x_u=0$ and $x_l$ is obtained by the relationship given above. Both upper and lower level functions are equal to $0$ at the optimum. Figure \ref{fig:smd1} shows the contours of the upper level function with respect to the upper and lower level variables for a four variable test problem with $dim(x_{u1})=dim(x_{u2})=dim(x_{l1})=dim(x_{l2})=1$. Figure \ref{fig:smd1} shows the contours of the upper level function at each $x_u$ in Sub-figure P assuming that the lower level variables are optimal. Sub-figure S shows the behavior of the upper level function with respect to $x_l$ at optimal $x_u$.

\begin{figure}
\begin{center}
\epsfig{file=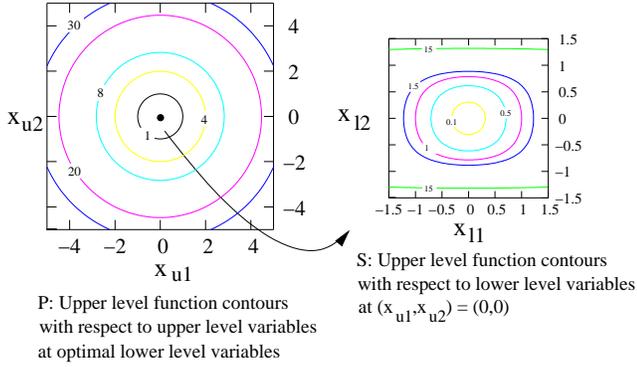,width=0.98\linewidth}
\caption{Upper level function contours for a four variable SMD1 test problem.}
\label{fig:smd1}
\end{center}
\end{figure}

\subsection{SMD2}
SMD2 is a test problem with a conflict between the upper level and lower level optimization tasks. The lower level optimization problem is a convex optimization task. An inaccurate lower level optimum may lead to upper level function value better than the true optimum for the bilevel problem. The upper level is convex with respect to upper level variables and optimal lower level variables.
\begin{equation}
\begin{array}{l}
F_0^1 = \sum_{i=1}^{p} (x_{u1}^{i})^2\\
F_0^2 = - \sum_{i=1}^{q} (x_{l1}^{i})^2\\
F_0^3 = \sum_{i=1}^{r} (x_{u2}^{i})^2 - \sum_{i=1}^{r} (x_{u2}^{i} - \log x_{l2}^{i})^2\\
f_0^1 = \sum_{i=1}^{p} (x_{u1}^{i})^2\\
f_0^2 = \sum_{i=1}^{q} (x_{l1}^{i})^2\\
f_0^3 = \sum_{i=1}^{r} (x_{u2}^{i} - \log x_{l2}^{i})^2\\
\end{array}
\end{equation}
The range of variables is as follows,
\begin{equation}
\begin{array}{l}
x_{u1}^{i} \in [-5,10], \hspace{2mm} \forall \hspace{2mm} i \in \{1,2,\ldots,p\}\\
x_{u2}^{i} \in [-5,1], \hspace{2mm} \forall \hspace{2mm} i \in \{1,2,\ldots,r\}\\
x_{l1}^{i} \in [-5,10], \hspace{2mm} \forall \hspace{2mm} i \in \{1,2,\ldots,q\}\\
x_{l2}^{i} \in (0,e], \hspace{2mm} \forall \hspace{2mm} i \in \{1,2,\ldots,r\}
\end{array}
\end{equation}
Relationship between upper level variables and lower level optimal variables is given as follows,
\begin{equation}
\begin{array}{l}
x_{l1}^{i} = 0, \hspace{2mm} \forall \hspace{2mm} i \in \{1,2,\ldots,q\}\\
x_{l2}^{i} = \log^{-1} x_{u2}^{i}, \hspace{2mm} \forall \hspace{2mm} i \in \{1,2,\ldots,r\}
\end{array}
\end{equation}
The values of the variables at the optimum are $x_u=0$ and $x_l$ is obtained by the relationship given above. Both upper and lower level functions are equal to $0$ at the optimum. Figure \ref{fig:smd2} represents the same information as in Figure \ref{fig:smd1} for a four variable bilevel test problem.

\begin{figure}
\begin{center}
\epsfig{file=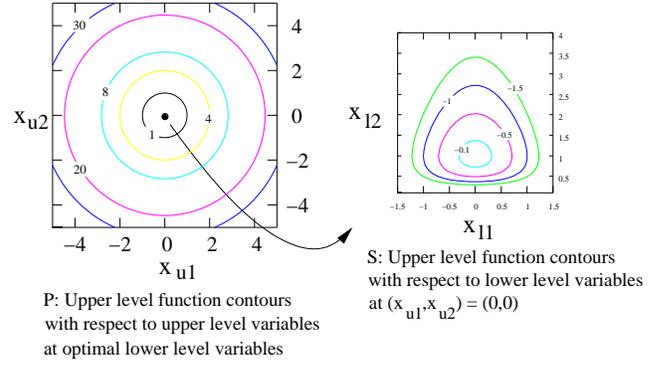,width=0.98\linewidth}
\caption{Upper level function contours for a four variable SMD2 test problem.}
\label{fig:smd2}
\end{center}
\end{figure}

\subsection{SMD3}
SMD3 is a test problem with a cooperation between the two levels. The difficulty introduced is in terms of multi-modality at the lower level which contains the Rastrigin's function. The upper level is convex with respect to upper level variables and optimal lower level variables.
\begin{equation}
\begin{array}{l}
F_0^1 = \sum_{i=1}^{p} (x_{u1}^{i})^2\\
F_0^2 = \sum_{i=1}^{q} (x_{l1}^{i})^2\\
F_0^3 = \sum_{i=1}^{r} (x_{u2}^{i})^2 + \sum_{i=1}^{r} ((x_{u2}^{i})^2 - \tan x_{l2}^{i})^2\\
f_0^1 = \sum_{i=1}^{p} (x_{u1}^{i})^2\\
f_0^2 = q + \sum_{i=1}^{q} \left(\left(x_{l1}^{i}\right)^2 - \cos 2\pi x_{l1}^{i}\right)\\
f_0^3 = \sum_{i=1}^{r} ((x_{u2}^{i})^2 - \tan x_{l2}^{i})^2\\
\end{array}
\end{equation}
The range of variables is as follows,
\begin{equation}
\begin{array}{l}
x_{u1}^{i} \in [-5,10], \hspace{2mm} \forall \hspace{2mm} i \in \{1,2,\ldots,p\}\\
x_{u2}^{i} \in [-5,10], \hspace{2mm} \forall \hspace{2mm} i \in \{1,2,\ldots,r\}\\
x_{l1}^{i} \in [-5,10], \hspace{2mm} \forall \hspace{2mm} i \in \{1,2,\ldots,q\}\\
x_{l2}^{i} \in (\frac{-\pi}{2},\frac{\pi}{2}), \hspace{2mm} \forall \hspace{2mm} i \in \{1,2,\ldots,r\}
\end{array}
\end{equation}
Relationship between upper level variables and lower level optimal variables is given as follows,
\begin{equation}
\begin{array}{l}
x_{l1}^{i} = 0, \hspace{2mm} \forall \hspace{2mm} i \in \{1,2,\ldots,q\}\\
x_{l2}^{i} = \tan^{-1} (x_{u2}^{i})^2, \hspace{2mm} \forall \hspace{2mm} i \in \{1,2,\ldots,r\}
\end{array}
\end{equation}
The values of the variables at the optimum are $x_u=0$ and $x_l$ is obtained by the relationship given above. Both upper and lower level functions are equal to $0$ at the optimum. Figure \ref{fig:smd3} shows the contours of the upper level function at each $x_u$ in Sub-figure P assuming that the lower level variables are optimal. Sub-figure S shows the behavior of the lower level function at optimal $x_u$.

\begin{figure}
\begin{center}
\epsfig{file=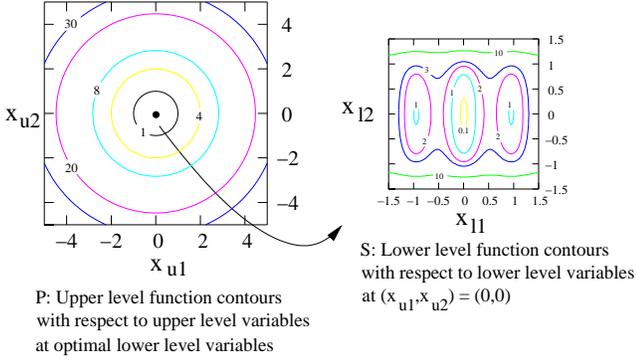,width=0.98\linewidth}
\caption{Upper and lower level function contours for a four variable SMD3 test problem.}
\label{fig:smd3}
\end{center}
\end{figure}

\subsection{SMD4}
SMD4 is a test problem with a conflict between the two levels. The difficulty is in terms of multi-modality at the lower level which contains the Rastrigin's function. The upper level is convex with respect to upper level variables and optimal lower level variables.
\begin{equation}
\begin{array}{l}
F_0^1 = \sum_{i=1}^{p} (x_{u1}^{i})^2\\
F_0^2 = - \sum_{i=1}^{q} (x_{l1}^{i})^2\\
F_0^3 = \sum_{i=1}^{r} (x_{u2}^{i})^2 - \sum_{i=1}^{r} (|x_{u2}^{i}| - \log (1+x_{l2}^{i}))^2\\
f_0^1 = \sum_{i=1}^{p} (x_{u1}^{i})^2\\
f_0^2 = q + \sum_{i=1}^{q} \left(\left(x_{l1}^{i}\right)^2 - \cos 2\pi x_{l1}^{i}\right)\\
f_0^3 = \sum_{i=1}^{r} (|x_{u2}^{i}| - \log(1+x_{l2}^{i}))^2\\
\end{array}
\end{equation}
The range of variables is as follows,
\begin{equation}
\begin{array}{l}
x_{u1}^{i} \in [-5,10], \hspace{2mm} \forall \hspace{2mm} i \in \{1,2,\ldots,p\}\\
x_{u2}^{i} \in [-1,1], \hspace{2mm} \forall \hspace{2mm} i \in \{1,2,\ldots,r\}\\
x_{l1}^{i} \in [-5,10], \hspace{2mm} \forall \hspace{2mm} i \in \{1,2,\ldots,q\}\\
x_{l2}^{i} \in [0,e], \hspace{2mm} \forall \hspace{2mm} i \in \{1,2,\ldots,r\}
\end{array}
\end{equation}
Relationship between upper level variables and lower level optimal variables is given as follows,
\begin{equation}
\begin{array}{l}
x_{l1}^{i} = 0, \hspace{2mm} \forall \hspace{2mm} i \in \{1,2,\ldots,q\}\\
x_{l2}^{i} = \log^{-1} |x_{u2}^{i}| - 1, \hspace{2mm} \forall \hspace{2mm} i \in \{1,2,\ldots,r\}
\end{array}
\end{equation}
The values of the variables at the optimum are $x_u=0$ and $x_l$ is obtained by the relationship given above. Both upper and lower level functions are equal to $0$ at the optimum. Figure~\ref{fig:smd4} represents the same information as in Figure~\ref{fig:smd3} for a four variable bilevel problem.

\begin{figure}
\begin{center}
\epsfig{file=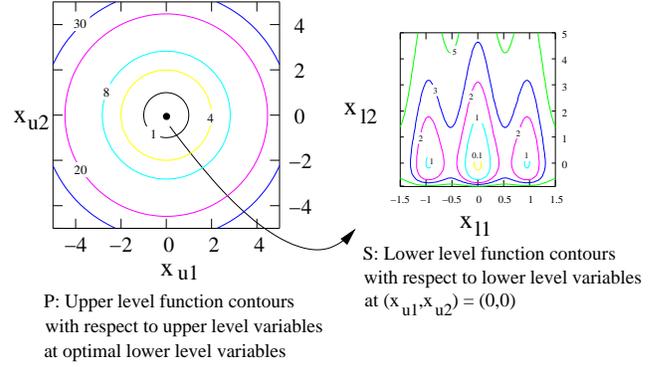,width=0.98\linewidth}
\caption{Upper and lower level function contours for a four variable SMD4 test problem.}
\label{fig:smd4}
\end{center}
\end{figure}

\subsection{SMD5}
SMD5 is a test problem with a conflict between the two levels. The difficulty introduced is in terms of multi-modality and convergence at the lower level. The lower level problem contains the banana function such that the global optimum lies in a long, narrow, flat parabolic valley. The upper level is convex with respect to upper level variables and optimal lower level variables.
\begin{equation}
\begin{array}{l}
F_0^1 = \sum_{i=1}^{p} (x_{u1}^{i})^2\\
F_0^2 = - \sum_{i=1}^{q} \left( \left(x_{l1}^{i+1} - \left(x_{l1}^{i}\right)^2\right) + \left(x_{l1}^{i} - 1\right)^2 \right)\\
F_0^3 = \sum_{i=1}^{r} (x_{u2}^{i})^2 - \sum_{i=1}^{r} (|x_{u2}^{i}| - (x_{l2}^{i})^2)^2\\
f_0^1 = \sum_{i=1}^{p} (x_{u1}^{i})^2\\
f_0^2 = \sum_{i=1}^{q} \left( \left(x_{l1}^{i+1} - \left(x_{l1}^{i}\right)^2\right) + \left(x_{l1}^{i} - 1\right)^2 \right)\\
f_0^3 = \sum_{i=1}^{r} (|x_{u2}^{i}| - (x_{l2}^{i})^2)^2\\
\end{array}
\end{equation}
The range of variables is as follows,
\begin{equation}
\begin{array}{l}
x_{u1}^{i} \in [-5,10], \hspace{2mm} \forall \hspace{2mm} i \in \{1,2,\ldots,p\}\\
x_{u2}^{i} \in [-5,10], \hspace{2mm} \forall \hspace{2mm} i \in \{1,2,\ldots,r\}\\
x_{l1}^{i} \in [-5,10], \hspace{2mm} \forall \hspace{2mm} i \in \{1,2,\ldots,q\}\\
x_{l2}^{i} \in [-5,10], \hspace{2mm} \forall \hspace{2mm} i \in \{1,2,\ldots,r\}
\end{array}
\end{equation}
Relationship between upper level variables and lower level optimal variables is given as follows,
\begin{equation}
\begin{array}{l}
x_{l1}^{i} = 1, \hspace{2mm} \forall \hspace{2mm} i \in \{1,2,\ldots,q\}\\
x_{l2}^{i} = \sqrt{|x_{u2}^{i}|}, \hspace{2mm} \forall \hspace{2mm} i \in \{1,2,\ldots,r\}
\end{array}
\end{equation}
The values of the variables at the optimum are $x_u=0$ and $x_l$ is obtained by the relationship given above. Both upper and
lower level functions are equal to $0$ at the optimum.

\subsection{SMD6}
SMD6 is a test problem with a conflict between the two levels. The problem contains infinitely many global solutions at the lower level, for any given upper level vector. Out of the entire global solution set, there is only a single lower level point which corresponds to the best upper level function value.
\begin{equation}
\begin{array}{l}
F_0^1 = \sum_{i=1}^{p} (x_{u1}^{i})^2 \\
F_0^2 = - \sum_{i=1}^{q} (x_{l1}^{i})^2 + \sum_{i=q+1}^{q+s} (x_{l1}^{i})^2\\
F_0^3 = \sum_{i=1}^{r} (x_{u2}^{i})^2 - \sum_{i=1}^{r} (x_{u2}^{i} - x_{l2}^{i})^2\\
f_0^1 = \sum_{i=1}^{p} (x_{u1}^{i})^2\\
f_0^2 = \sum_{i=1}^{q} (x_{l1}^{i})^2 + \sum_{i=q+1, i=i+2}^{q+s-1} (x_{l1}^{i+1} - x_{l1}^{i})^2\\
f_0^3 = \sum_{i=1}^{r} (x_{u2}^{i} - x_{l2}^{i})^2\\
\end{array}
\end{equation}
The range of variables is as follows,
\begin{equation}
\begin{array}{l}
x_{u1}^{i} \in [-5,10], \hspace{2mm} \forall \hspace{2mm} i \in \{1,2,\ldots,p\}\\
x_{u2}^{i} \in [-5,10], \hspace{2mm} \forall \hspace{2mm} i \in \{1,2,\ldots,r\}\\
x_{l1}^{i} \in [-5,10], \hspace{2mm} \forall \hspace{2mm} i \in \{1,2,\ldots,q+s\}\\
x_{l2}^{i} \in [-5,10], \hspace{2mm} \forall \hspace{2mm} i \in \{1,2,\ldots,r\}
\end{array}
\end{equation}
Relationship between upper level variables and lower level optimal variables is given as follows,
\begin{equation}
\begin{array}{l}
x_{l1}^{i} = 0, \hspace{2mm} \forall \hspace{2mm} i \in \{1,2,\ldots,q\}\\
x_{l2}^{i} = x_{u2}^{i}, \hspace{2mm} \forall \hspace{2mm} i \in \{1,2,\ldots,r\}
\end{array}
\end{equation}
The values of the variables at the optimum are $x_u=0$ and $x_l$ is obtained by the relationship given above. Both upper and
lower level functions are equal to $0$ at the optimum.


\section{Standard test problems}
Tables \ref{tab:testset1} and \ref{tab:testset2} define a set of 10 standard test problems collected from the literature. The dimensions of the upper and lower level variables are given in the first column, and the problem formulation is defined in the second column. The third column provides the best known solution available in the literature for the chosen test problems.

\begin{table*}
\caption{Description of the selected standard test problems (TP1-TP5).}
{\small
\begin{tabular}{p{.15\textwidth}  >{$\displaystyle}p{.55\textwidth}<{$}  >{$\displaystyle}p{.15\textwidth}<{$}}
\toprule
$\mbox{Problem}$ & \mbox{Formulation} & \mbox{Best Known Sol.} \\
\toprule
TP1 \\ $n=2$ $m=2$ & 
\begin{array}{l}
\underset{{(x_u,x_l)}}{\mbox{Minimize}} \hspace{1mm} F_0(x_u,x_l) = (x_{u}^{1}-30)^2 + (x_{u}^{2}-20)^2 - 20 x_{l}^{1} + 20 x_{l}^{2}, \\
\mbox{s.t.}\\
\quad x_l \in \underset{(x_l)}{\argmin}\left\lbrace
	\begin{array}{l} 
 	f_0(x_u,x_l)=(x_{u}^{1}-x_{l}^{1})^2+(x_{u}^{2}-x_{l}^{2})^2\\
 		0 \le x_{l}^{i} \le 10, \quad i=1,2
	\end{array}
	 \right\rbrace, \\
\quad x_{u}^{1}+2x_{u}^{2} \ge 30, x_{u}^{1}+x_{u}^{2} \le 25, x_{u}^{2} \le 15\\
\end{array} 
   & \begin{array}{l} F_0 = 225.0 \\ f_0 = 100.0
\end{array}
\\ \midrule
TP2 \\ $n=2$ $m=2$ & 
\begin{array}{l}
\underset{{(x_u,x_l)}}{\mbox{Minimize}} \hspace{1mm} F_0(x_u,x_l) = 2 x_{u}^{1} + 2 x_{u}^{2} - 3 x_{l}^{1} - 3 x_{l}^{2} - 60, \\
\mbox{s.t.}\\
\quad x_l \in \underset{(x_l)}{\argmin}\left\lbrace 
	\begin{array}{l}
	 f_0(x_u,x_l)=(x_{l}^{1} - x_{u}^{1} + 20)^2+(x_{l}^{2} - x_{u}^{2} + 20)^2 \\
	x_{u}^{1} - 2 x_{l}^{1} \ge 10, x_{u}^{2} - 2 x_{l}^{2} \ge 10\\
	-10 \ge x_{l}^{i} \ge 20, \quad i=1,2
	\end{array}
	 \right\rbrace, \\
\quad x_{u}^{1}+x_{u}^{2}+x_{l}^{1}-2 x_{l}^{2} \le 40,\\
\quad 0 \le x_{u}^{i} \le 50, \quad i=1,2.
\end{array} 
   & \begin{array}{l} F_0 = 0.0 \\ f_0 = 100.0
\end{array}
\\ \midrule
TP3 \\ $n=2$ $m=2$ & 
\begin{array}{l}
\underset{{(x_u,x_l)}}{\mbox{Minimize}} \hspace{1mm} F_0(x_u,x_l) = -(x_{u}^{1})^2 - 3 (x_{u}^{2})^2 - 4 x_{l}^{1} + (x_{l}^{2})^2, \\
\mbox{s.t.}\\
\quad x_l \in \underset{(x_l)}{\argmin}\left\lbrace 
	\begin{array}{l}
	 f_0(x_u,x_l)=2 (x_{u}^{1})^2 + (x_{l}^{1})^2 - 5 x_{l}^{2} \\
	(x_{u}^{1})^2 - 2 x_{u}^{1} + (x_{u}^{2})^2 - 2 x_{l}^{1} + x_{l}^{2} \ge -3\\
	x_{u}^{2} + 3 x_{l}^{1} - 4 x_{l}^{2} \ge 4\\
	0 \le x_{l}^{i}, \quad i=1,2
	\end{array}
	 \right\rbrace, \\
\quad (x_{u}^{1})^2 + 2 x_{u}^{2} \le 4,\\
\quad 0 \le x_{u}^{i}, \quad i=1,2
\end{array} 
   & \begin{array}{l} F_0 = -18.6787 \\ f_0 = -1.0156
\end{array}
\\ \midrule
TP4 \\ $n=2$ $m=3$ & 
\begin{array}{l}
\underset{{(x_u,x_l)}}{\mbox{Minimize}} \hspace{1mm} F_0(x_u,x_l) = -8 x_{u}^{1} - 4 x_{u}^{2} + 4 x_{l}^{1} - 40 x_{l}^{2} - 4 x_{l}^{3}, \\
\mbox{s.t.}\\
\quad x_l \in \underset{(x_l)}{\argmin}\left\lbrace 
	\begin{array}{l}
	 f_0(x_u,x_l)=x_{u}^{1} + 2 x_{u}^{2} + x_{l}^{1} + x_{l}^{2} + 2 x_{l}^{3}\\
	x_{l}^{2} + x_{l}^{3} - x_{l}^{1} \le 1\\
	2 x_{u}^{1} - x_{l}^{1} + 2 x_{l}^{2} - 0.5 x_{l}^{3} \le 1\\
	2 x_{u}^{2} + 2 x_{l}^{1} - x_{l}^{2} - 0.5 x_{l}^{3} \le 1\\
	0 \le x_{l}^{i}, \quad i=1,2,3
	\end{array}
	 \right\rbrace, \\
\quad 0 \le x_{u}^{i}, \quad i=1,2
\end{array} 
   & \begin{array}{l} F_0 = -29.2 \\ f_0 = 3.2
\end{array}
\\ \bottomrule
TP5 \\ $n=2$ $m=2$ & 
\begin{array}{l}
\underset{{(x_u,x_l)}}{\mbox{Minimize}} \hspace{1mm} F_0(x_u,x_l) = r t(x_u) x_u - 3 x_{l}^{1} - 4 x_{l}^{2} + 0.5 t(x_l) x_l, \\
\mbox{s.t.}\\
\quad x_l \in \underset{(x_l)}{\argmin}\left\lbrace 
	\begin{array}{l}
	 f_0(x_u,x_l)=0.5 t(x_l) h x_l - t(b(x_u)) x_l\\
	-0.333 x_{l}^{1} + x_{l}^{2} - 2 \le 0\\
	x_{l}^{1} - 0.333 x_{l}^{2} - 2 \le 0\\
	0 \le x_{l}^{i}, \quad i=1,2
	\end{array}
	 \right\rbrace, \\
\mbox{where}\\
\quad h = \left( \begin{array}{cc} 1 & 3\\ 3 & 10\\ \end{array} \right), 
b(x) = \left( \begin{array}{cc} -1 & 2\\ 3 & -3\\ \end{array} \right)x, 
r = 0.1\\
\quad t(\cdot) \mbox{ denotes transpose of a vector}
\end{array} 
   & \begin{array}{l} F_0 = -3.6 \\ f_0 = -2.0
\end{array}
\\ \midrule
\end{tabular}
}
\vspace{-1mm}
\label{tab:testset1}
\end{table*}

\begin{table*}
\caption{Description of the selected standard test problems (TP6-TP10).}
{\small 
\begin{tabular}{p{.12\textwidth} >{$\displaystyle}p{.65\textwidth}<{$} >{$\displaystyle}p{.15\textwidth}<{$}}
\toprule
$\mbox{Problem}$ & \mbox{Formulation} & \mbox{Best Known Sol.} \\
\toprule
TP6 \\ $n=1$ $m=2$ & 
\begin{array}{l}
\underset{{(x_u,x_l)}}{\mbox{Minimize}} \hspace{1mm} F_0(x_u,x_l) = (x_{u}^{1} - 1)^2 + 2 x_{l}^{1} - 2 x_{u}^{1}, \\
\mbox{s.t.}\\
\quad x_l \in \underset{(x_l)}{\argmin}\left\lbrace 
	\begin{array}{l}
	 f_0(x_u,x_l)=(2 x_{l}^{1}-4)^2 +\\ (2 x_{l}^{2} - 1)^2 + x_{u}^{1} x_{l}^{1}\\
	4 x_{u}^{1} + 5 x_{l}^{1} + 4 x_{l}^{2} \le 12\\
	4 x_{l}^{2} - 4 x_{u}^{1} - 5 x_{l}^{1} \le -4\\
	4 x_{u}^{1} - 4 x_{l}^{1} + 5 x_{l}^{2} \le 4\\
	4 x_{l}^{1} - 4 x_{u}^{1} + 5 x_{l}^{2} \le 4\\
	0 \le x_{l}^{i}, \quad i=1,2
	\end{array}
	 \right\rbrace, \\
\quad 0 \le x_{u}^{1}
\end{array} 
   & \begin{array}{l} F_0 = -1.2091 \\ f_0 = 7.6145
\end{array}
\\ \midrule
TP7 \\ $n=2$ $m=2$ & 
\begin{array}{l}
\underset{{(x_u,x_l)}}{\mbox{Minimize}} \hspace{1mm} F_0(x_u,x_l) = -\frac{(x_{u}^{1}+x_{l}^{1})(x_{u}^{2}+x_{l}^{2})}{1 + x_{u}^{1} x_{l}^{1} + x_{u}^{2} x_{l}^{2}}, \\
\mbox{s.t.}\\
\quad x_l \in \underset{(x_l)}{\argmin}\left\lbrace 
	\begin{array}{l}
	 f_0(x_u,x_l)=\frac{(x_{u}^{1}+x_{l}^{1})(x_{u}^{2}+x_{l}^{2})}{1 + x_{u}^{1} x_{l}^{1} + x_{u}^{2} x_{l}^{2}}\\
	0 \le x_{l}^{i} \le x_{u}^{i}, \quad i=1,2
	\end{array}
	 \right\rbrace, \\
\quad (x_{u}^{1})^2 + (x_{u}^{2})^2 \le 100\\
\quad x_{u}^{1} - x_{u}^{2} \le 0\\
\quad 0 \le x_{u}^{i}, \quad i=1,2
\end{array} 
   & \begin{array}{l} F_0 = -1.96 \\ f_0 = 1.96
\end{array}
\\ \midrule
TP8  \\ $n=2$ $m=2$ & 
\begin{array}{l}
\underset{{(x_u,x_l)}}{\mbox{Minimize}} \hspace{1mm} F_0(x_u,x_l) = |2 x_{u}^{1} + 2 x_{u}^{2} - 3 x_{l}^{1} - 3 x_{l}^{2} - 60|, \\
\mbox{s.t.}\\
\quad x_l \in \underset{(x_l)}{\argmin}\left\lbrace 
	\begin{array}{l}
	 f_0(x_u,x_l)=(x_{l}^{1} - x_{u}^{1} + 20)^2 +\\ (x_{l}^{2} - x_{u}^{2} + 20)^2\\
	2x_{l}^{1} - x_{u}^{1} + 10 \le 0\\
	2x_{l}^{2} - x_{u}^{2} + 10 \le 0\\
	-10 \le x_{l}^{i} \le 20, \quad i=1,2
	\end{array}
	 \right\rbrace, \\
\quad x_{u}^{1} + x_{u}^{2} + x_{l}^{1} - 2 x_{l}^{2} \le 40\\
\quad 0 \le x_{u}^{i} \le 50, \quad i=1,2
\end{array} 
   & \begin{array}{l} F_0 = 0.0 \\ f_0 = 100.0
\end{array}
\\ \midrule
TP9 \\ $n=10$ $m=10$ & 
\begin{array}{l}
\underset{{(x_u,x_l)}}{\mbox{Minimize}} \hspace{1mm} F_0(x_u,x_l) = \sum_{i=1}^{10} \left( |x_{u}^{i}-1| + |x_{l}^{i}|\right), \\
\mbox{s.t.}\\
\quad x_l \in \underset{(x_l)}{\argmin}\left\lbrace 
	\begin{array}{l}
	 f_0(x_u,x_l)=e^{\left(1+\frac{1}{4000}\sum_{i=1}^{10} (x_{l}^{i})^2 - \prod_{i=1}^{10} \cos(\frac{x_{l}^{i}}{\sqrt{i}})\right) \sum_{i=1}^{10} (x_{u}^{i})^{2}}\\
\quad -\pi \le x_{l}^{i} \le \pi, \quad i=1, 2 \ldots, 10
	\end{array}
	 \right\rbrace,
\end{array} 
   & \begin{array}{l} F_0 = 0.0 \\ f_0 = 1.0
\end{array}
\\ \midrule
TP10  \\ $n=10$ $m=10$ & 
\begin{array}{l}
\underset{{(x_u,x_l)}}{\mbox{Minimize}} \hspace{1mm} F_0(x_u,x_l) = \sum_{i=1}^{10} \left( |x_{u}^{i}-1| + |x_{l}^{i}|\right), \\
\mbox{s.t.}\\
\quad x_l \in \underset{(x_l)}{\argmin}\left\lbrace 
	\begin{array}{l}
	 f_0(x_u,x_l)=e^{\left(1+\frac{1}{4000}\sum_{i=1}^{10} (x_{l}^{i} x_{u}^{i})^2 - \prod_{i=1}^{10} \cos(\frac{x_{l}^{i} x_{u}^{i}}{\sqrt{i}})\right)}\\
\quad -\pi \le x_{l}^{i} \le \pi, \quad i=1, 2 \ldots, 10
	\end{array}
	 \right\rbrace,
\end{array} 
   & \begin{array}{l} F_0 = 0.0 \\ f_0 = 1.0
\end{array}
\\ \bottomrule
\end{tabular}
}
\vspace{-1mm}
\label{tab:testset2}
\end{table*}

\section{Results}
In this section, we provide the results obtained on the SMD test problems and standard test problems using BLEAQ and other approaches. The section is divided into two parts. The first part contains the results obtained on the SMD test problems using BLEAQ and the nested procedure. It was difficult to identify an approach from the existing literature, which can efficiently handle the SMD test problems with 10 variables. Therefore, we have chosen the nested approach as the benchmark for this study. The second part contains the results obtained on 10 standard constrained test problems, which have been studied by others in the past. We compare our method against the approaches proposed by Wang et al. (2005,2011) \cite{wang05,wang11}. Both approaches are able to handle all the standard test problems successfully.

\subsection{Results for SMD test problems}
We report the results obtained by the nested procedure and the BLEAQ approach in this sub-section on 10-dimensional unconstrained SMD test problems. For test problems SMD1 to SMD5 we choose $p=3$, $q=3$ and $r=2$, and for SMD6 we choose $p=3$, $q=1$, $r=2$ and $s=2$. Tables~\ref{tab:sub1table1} and ~\ref{tab:sub1table2} provide the results obtained using the nested approach \cite{my-cec12a} on these test problems. The nested approach uses an evolutionary algorithm at both levels, where for every upper level member a lower level optimization problem is solved. It successfully handles all the test problems. However, as can be observed from Table~\ref{tab:sub1table1}, the number of function evaluations required are very high at the lower level. We use the nested approach as a benchmark to assess the savings obtained from the BLEAQ approach.

Tables~\ref{tab:sub1table3} and~\ref{tab:sub1table4} provide the results obtained using the BLEAQ approach. The 4\textsuperscript{th} and 5\textsuperscript{th} columns in Table~\ref{tab:sub1table3} give the median function evaluations required at the upper and lower levels respectively. The numbers in the brackets represent a ratio of the function evaluations required using nested approach against the function evaluations required using BLEAQ. The nested approach requires 2 to 5 times more function evaluations at the upper level, and more than 10 times function evaluations at the lower level as compared to BLEAQ. Similar results are provided in terms of mean in the 6\textsuperscript{th} and 7\textsuperscript{th} columns of the same table. Both approaches are able to successfully handle all the test problems. However, BLEAQ dominates the nested approach by a significant margin, particularly in terms of the function evaluations at the lower level.

\begin{table*}[hbt]
\caption{Function evaluations (FE) for the upper level (UL) and the lower level (LL) from 31
  runs of nested approach.} 
\label{tab:sub1table1}
{\small\begin{center}
\begin{tabular}{|c|c|c|c|c|c|c|c|c|} \hline
Pr. No.	&	\multicolumn{2}{|c|}{Best}	&	\multicolumn{2}{|c|}{Median}	&	\multicolumn{2}{|c|}{Mean} & \multicolumn{2}{|c|}{Worst}	\\	\cline{2-9}
	&		\multicolumn{1}{|c|}{Total LL}	&	\multicolumn{1}{|c|}{Total UL}	&	\multicolumn{1}{|c|}{Total LL}	&	\multicolumn{1}{|c|}{Total UL}	&	\multicolumn{1}{|c|}{Total LL}	&\multicolumn{1}{|c|}{Total UL} & \multicolumn{1}{|c|}{Total LL}	&\multicolumn{1}{|c|}{Total UL}	\\	
	&	\multicolumn{1}{|c|}{FE} 	&
        \multicolumn{1}{|c|}{FE}	&\multicolumn{1}{|c|}{FE}         &\multicolumn{1}{|c|}{FE}        &
        \multicolumn{1}{|c|}{FE}& \multicolumn{1}{|c|}{FE} &  \multicolumn{1}{|c|}{FE}& \multicolumn{1}{|c|}{FE}	\\ \hline	
SMD1	&	807538	&	1180	&	1693710	&	2497	&	1782864.94	&	2670.87	&	2436525	&	3356	\\	\hline
SMD2	&	940746	&	1369	&	1524671	&	2309	&	1535369.41	&	2500.74	&	2548509	&	3972	\\	\hline
SMD3	&	862708	&	1113	&	1443053	&	2101	&	1405744.86	&	2218.22	&	1883302	&	3143	\\	\hline
SMD4	&	528564	&	727	&	1051430	&	1614	&	985106.94	&	1627.05	&	1567362	&	2009	\\	\hline
SMD5	&	1216411	&	1540	&	1825140	&	2992	&	1937586.63	&	3100.91	&	3107135	&	4177	\\	\hline
SMD6	&	1209859	&	1618	&	2398020	&	2993	&	2497097.41	&	3012.01	&	3202710	&	4861	\\	\hline
\end{tabular}
\end{center}}
\end{table*}

\begin{table*}[hbt]
\caption{Accuracy for the upper and lower levels, and the lower level calls from 31
  runs of nested approach.} 
\label{tab:sub1table2}
\begin{center}
\begin{tabular}{|c|c|c|c|c|c|c|c|c|} \hline
Pr. No. & Median & Median & Median & Mean & Mean & Mean & &\\ \cline{2-7}
	& UL Accuracy & LL Accuracy & LL Calls & UL Accuracy & LL Accuracy & LL Calls &  $\frac{\mbox{Med LL Evals}}{\mbox{Med LL Calls}}$ & $\frac{\mbox{Mean LL Evals}}{\mbox{Mean LL Calls}}$ \\ \hline
SMD1	&	0.005365	&	0.001616	&	2497	&	0.005893	&	0.001467	&	2670.87	&	678.33	&	667.52	\\	\hline
SMD2	&	0.001471	&	0.000501	&	2309	&	0.001582	&	0.000539	&	2500.74	&	660.24	&	613.97	\\	\hline
SMD3	&	0.008485	&	0.002454	&	2101	&	0.009660	&	0.002258	&	2218.22	&	686.87	&	633.73	\\	\hline
SMD4	&	0.008140	&	0.002866	&	1614	&	0.008047	&	0.002530	&	1627.05	&	651.54	&	605.46	\\	\hline
SMD5	&	0.001285	&	0.003146	&	2992	&	0.001311	&	0.002904	&	3100.91	&	610.07	&	624.84	\\	\hline
SMD6	&	0.009403	&	0.007082	&	2993	&	0.009424	&	0.008189	&	3012.01	&	801.17	&	829.05	\\	\hline
\end{tabular}
\end{center}
\end{table*}

\begin{table*}[hbt]
\caption{Function evaluations (FE) for the upper level (UL) and the lower level (LL) from 31
  runs with BLEAQ.} 
\label{tab:sub1table3}
{\small\begin{center}
\begin{tabular}{|c|c|c|c|c|c|c|c|c|} \hline
Pr. No.	&	\multicolumn{2}{|c|}{Best Func. Evals.}	&	\multicolumn{2}{|c|}{Median Func. Evals.} & \multicolumn{2}{|c|}{Mean Func. Evals.}	&	\multicolumn{2}{|c|}{Worst Func. Evals.}	\\	\cline{2-9}
	&		\multicolumn{1}{|c|}{LL}	&	\multicolumn{1}{|c|}{UL} & \multicolumn{1}{|c|}{LL}	&	\multicolumn{1}{|c|}{UL}	&	\multicolumn{1}{|c|}{LL}	&	\multicolumn{1}{|c|}{UL}	&	\multicolumn{1}{|c|}{LL}	&\multicolumn{1}{|c|}{UL}	\\	
	&	\multicolumn{1}{|c|}{} 	&
        \multicolumn{1}{|c|}{}	&\multicolumn{1}{|c|}{(Savings)}         &\multicolumn{1}{|c|}{(Savings)}        &
        \multicolumn{1}{|c|}{(Savings)}& \multicolumn{1}{|c|}{(Savings)} & \multicolumn{1}{|c|}{}& \multicolumn{1}{|c|}{}	\\ \hline	
SDM1	&	89966	&	589	&	110366	(15.35)	&	780	(3.20)	&	117282.14	(15.20)	&	782.55	(3.41)	&	192835	&	1636	\\	\hline
SDM2	&	67589	&	364	&	92548	(16.47)	&	615	(3.76)	&	98392.73	(15.60)	&	629.05	(3.98)	&	141868	&	1521	\\	\hline
SDM3	&	107516	&	590	&	128493	(11.23)	&	937	(2.24)	&	146189.13	(9.62)	&	906.99	(2.45)	&	145910	&	1132	\\	\hline
SDM4	&	58604	&	391	&	74274	(14.16)	&	735	(2.20)	&	73973.84	(13.32)	&	743.58	(2.19)	&	101832	&	1139	\\	\hline
SDM5	&	96993	&	311	&	127961	(14.26)	&	633	(4.73)	&	132795.14	(14.59)	&	626.44	(4.95)	&	206718	&	1316	\\	\hline
SDM6	&	90574	&	640	&	125833	(19.06)	&	970	(3.09)	&	123941.14	(20.15)	&	892.92	(3.37)	&	196966	&	1340	\\	\hline
\end{tabular}
\end{center}}
\end{table*}

\begin{table*}[hbt]
\caption{Accuracy for the upper and lower levels, and the lower level calls from 31
  runs with BLEAQ.} 
\label{tab:sub1table4}
\begin{center}
\begin{tabular}{|c|c|c|c|c|c|c|c|c|} \hline
Pr. No. & Median & Median & Median & Mean & Mean & Mean & &\\ \cline{2-7}
	& UL Accuracy & LL Accuracy & LL Calls & UL Accuracy & LL Accuracy & LL Calls &  $\frac{\mbox{Med LL Evals}}{\mbox{Med LL Calls}}$ & $\frac{\mbox{Mean LL Evals}}{\mbox{Mean LL Calls}}$ \\ \hline
SDM1	&	0.006664	&	0.003347	&	507	&	0.006754	&	0.003392	&	467.84	&	217.71	&	250.69	\\	\hline
SDM2	&	0.003283	&	0.002971	&	503	&	0.003416	&	0.002953	&	504.15	&	184.05	&	195.16	\\	\hline
SDM3	&	0.009165	&	0.004432	&	601	&	0.008469	&	0.003904	&	585.60	&	213.71	&	249.64	\\	\hline
SDM4	&	0.007345	&	0.002796	&	538	&	0.006817	&	0.002499	&	543.27	&	138.07	&	136.16	\\	\hline
SDM5	&	0.004033	&	0.003608	&	527	&	0.004257	&	0.004074	&	525.08	&	242.76	&	252.90	\\	\hline
SDM6	&	0.000012	&	0.000008	&	505	&	0.000012	&	0.000008	&	546.34	&	249.17	&	226.86	\\	\hline
\end{tabular}
\end{center}
\end{table*}

\subsection{Convergence Study}
In this sub-section, we examine the convergence of the algorithm towards the bilevel optimal solution. As test cases we use the first two SMD test problems with 10-dimensions. The results are presented in Figures \ref{fig:smd1-convergence} and \ref{fig:smd2-convergence} from a sample run. The figures show the progress of the elite member at each of the generations. The upper plot shows the convergence towards the upper level optimal function value, and the lower plot shows the convergence towards the lower level optimal function value. The algorithm preserves the elite member at the upper level, so we observe a continuous improvement in the upper plot. However, in a bilevel problem an improvement in the elite at the upper level does not guarantee a continuous improvement in the lower level function value. Therefore, we observe that the lower level convergence plot is not a continuously reducing plot rather contains small humps.
\begin{figure*}
\begin{minipage}[t]{0.47\linewidth}
\begin{center}
\epsfig{file=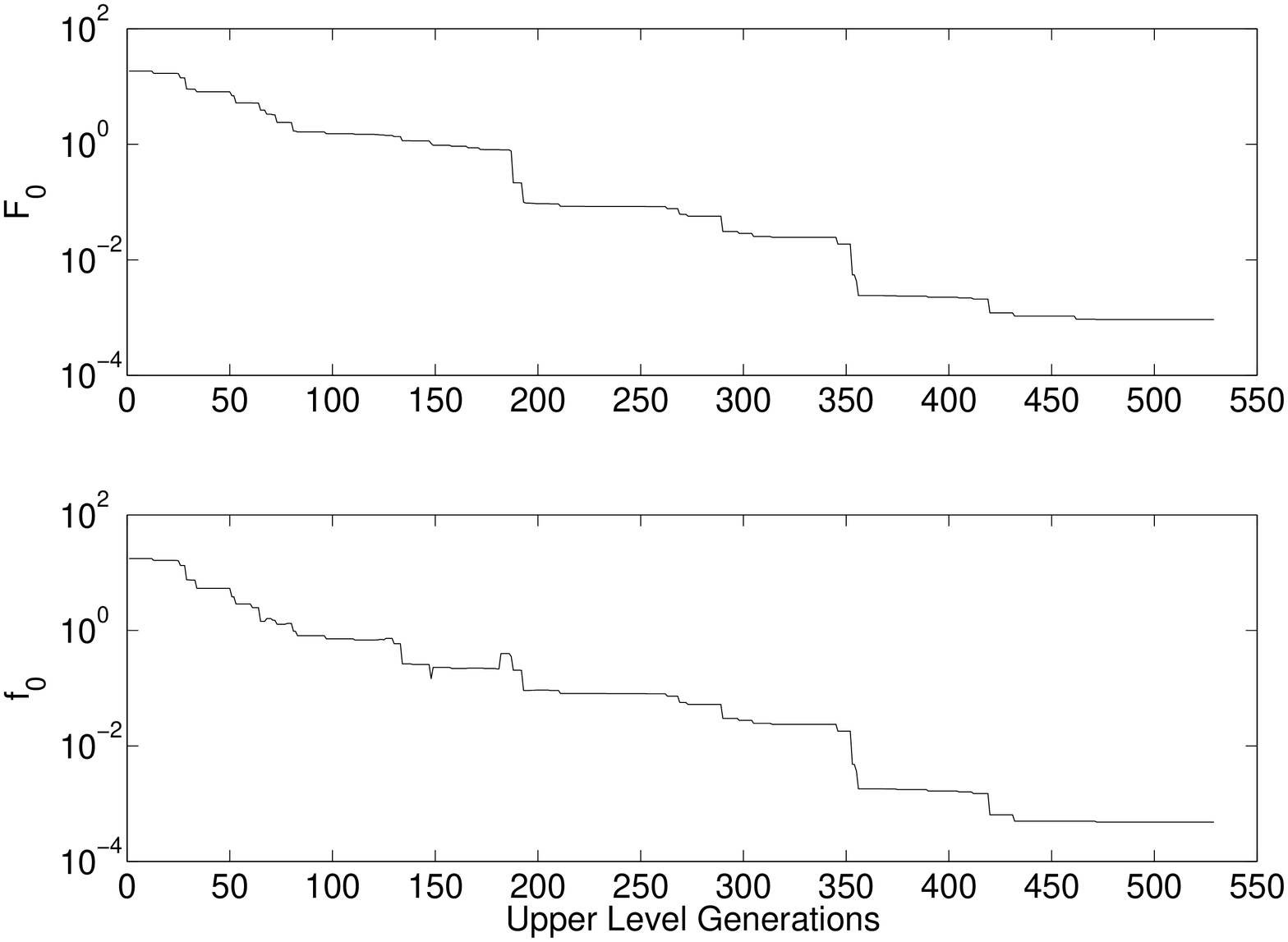,width=\linewidth}
\end{center}
\caption{Convergence plot for the upper and lower level function values for the elite member obtained from a sample run of BLEAQ on SMD1 problem.}
\label{fig:smd1-convergence}
\end{minipage}\hfill
\begin{minipage}[t]{0.47\linewidth}
\begin{center}
\epsfig{file=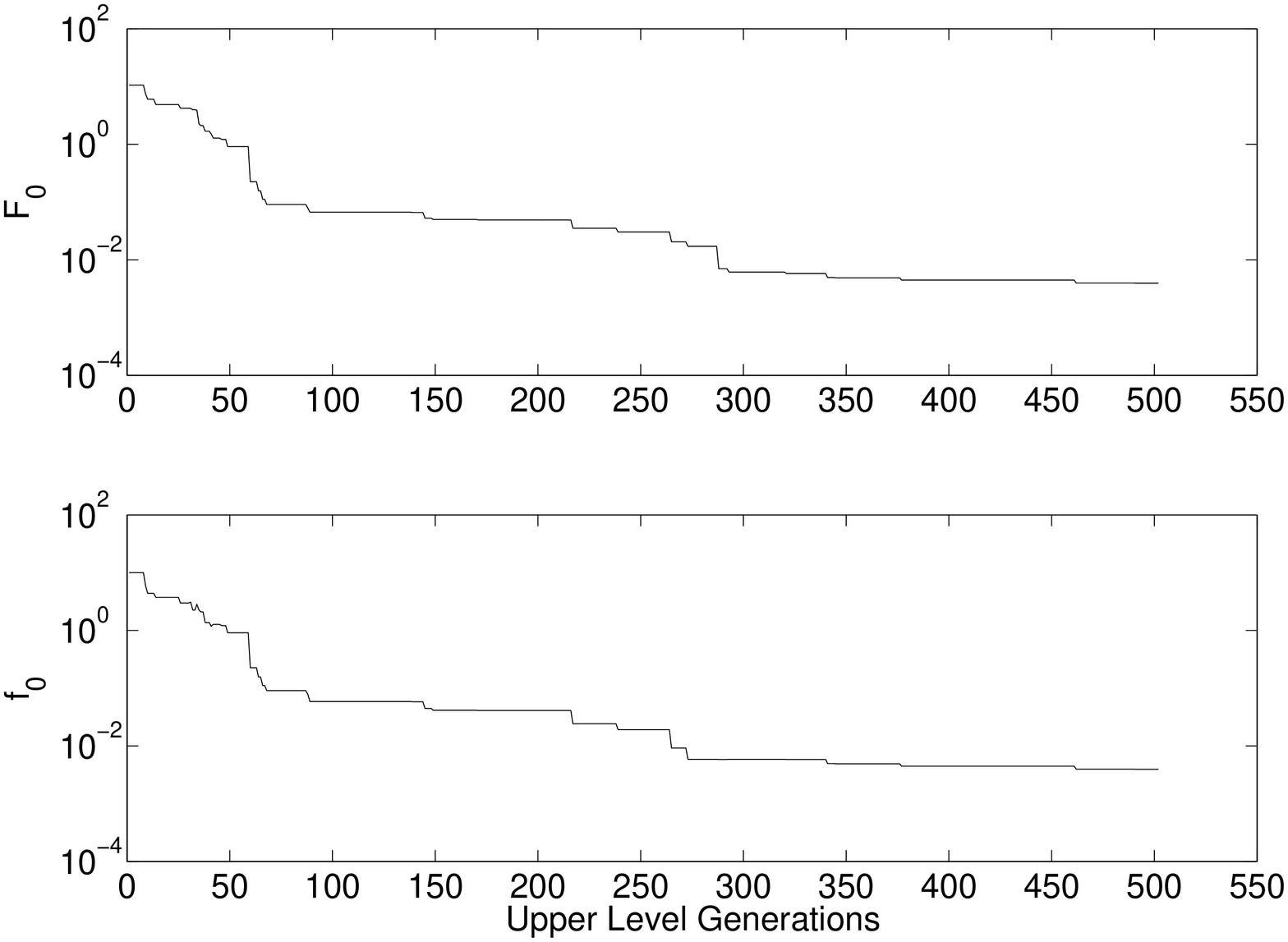,width=\linewidth} 
\end{center}
\caption{Convergence plot for the upper and lower level function values for the elite member obtained from a sample run of BLEAQ on SMD2 problem.}
\label{fig:smd2-convergence}
\end{minipage}
\end{figure*}
Figures \ref{fig:smd1-convergence-multiple} and \ref{fig:smd2-convergence-multiple} provide the mean convergence for SMD1 and SMD2 from 31 runs of the algorithm. The upper curve represents the mean of the upper level function values for the elite members, and the lower curve represents the mean of the lower level function values for the elite members obtained from multiple runs of the algorithm. The bars show the minimum and maximum function values for the elite members at each generation. We observe that for all the runs the algorithm converges to the optimum function values at both levels.
\begin{figure*}
\begin{minipage}[t]{0.47\linewidth}
\begin{center}
\epsfig{file=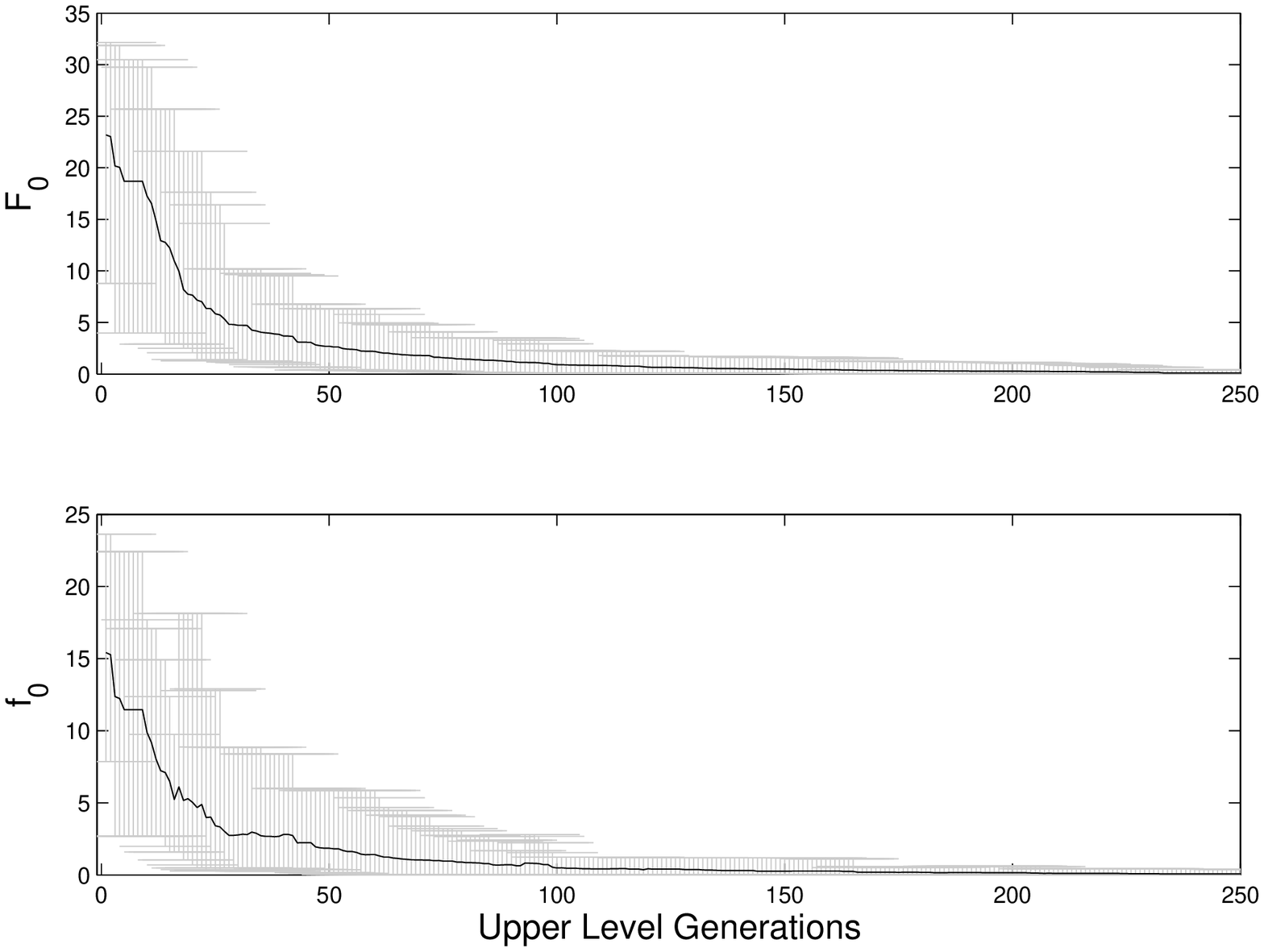,width=\linewidth}
\end{center}
\caption{Convergence plot for mean upper level function value and mean lower level function value for elite members obtained from 31 runs of BLEAQ on SMD1 problem.}
\label{fig:smd1-convergence-multiple}
\end{minipage}\hfill
\begin{minipage}[t]{0.47\linewidth}
\begin{center}
\epsfig{file=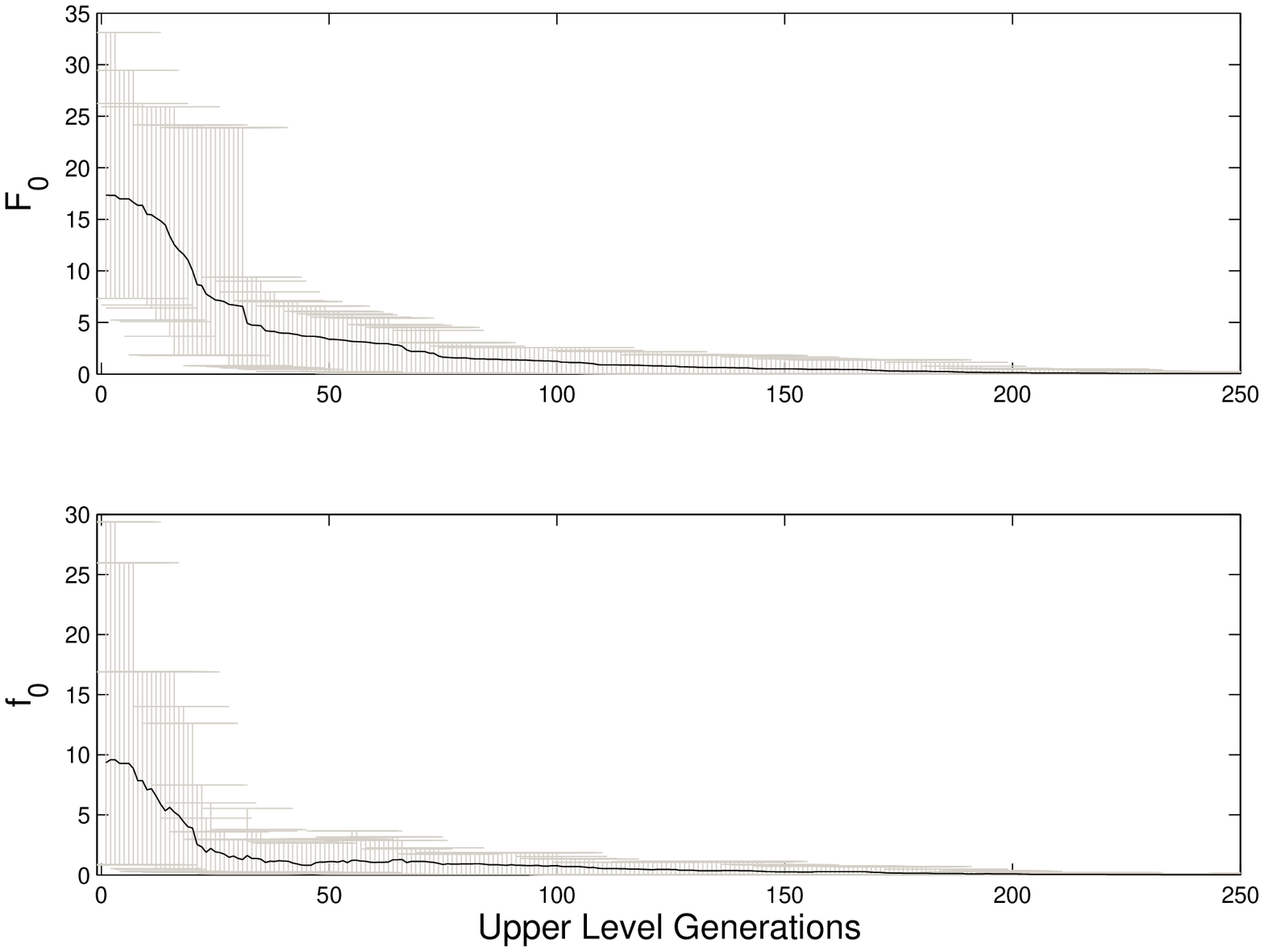,width=\linewidth} 
\end{center}
\caption{Convergence plot for mean upper level function value and mean lower level function value for elite members obtained from 31 runs of BLEAQ on SMD2 problem.}
\label{fig:smd2-convergence-multiple}
\end{minipage}
\end{figure*}

Next, we evaluate the algorithm's performance in approximating the actual $\Psi$ mapping. For this we choose the problem SMD1, which has the following $\Psi$ mapping.
\begin{equation}
\begin{array}{l}
x_{l1}^{i} = 0, \hspace{2mm} \forall \hspace{2mm} i \in \{1,2,\ldots,p\}\\
x_{l2}^{i} = \tan^{-1} x_{u2}^{i}, \hspace{2mm} \forall \hspace{2mm} i \in \{1,2,\ldots,r\}
\end{array}
\end{equation}
We set $p=3$, $q=3$ and $r=2$ to get a 10-dimensional SMD1 test problem. The $\Psi$ mapping for SMD1 test problem is variable separable. Therefore, in order to show the convergence of the approximation on a 2-d plot, we choose the variable $x_{l2}^{1}$, and show its approximation with respect to $x_{u2}^{1}$. The true relationship between the variables is shown in Figure \ref{fig:quadApprox}, along with the quadratic approximations generated at various generations. It can be observed that the approximations in the vicinity of the true bilevel optimum improve with increasing number of generations.

\begin{figure*}[t]
\begin{center}
\epsfig{file=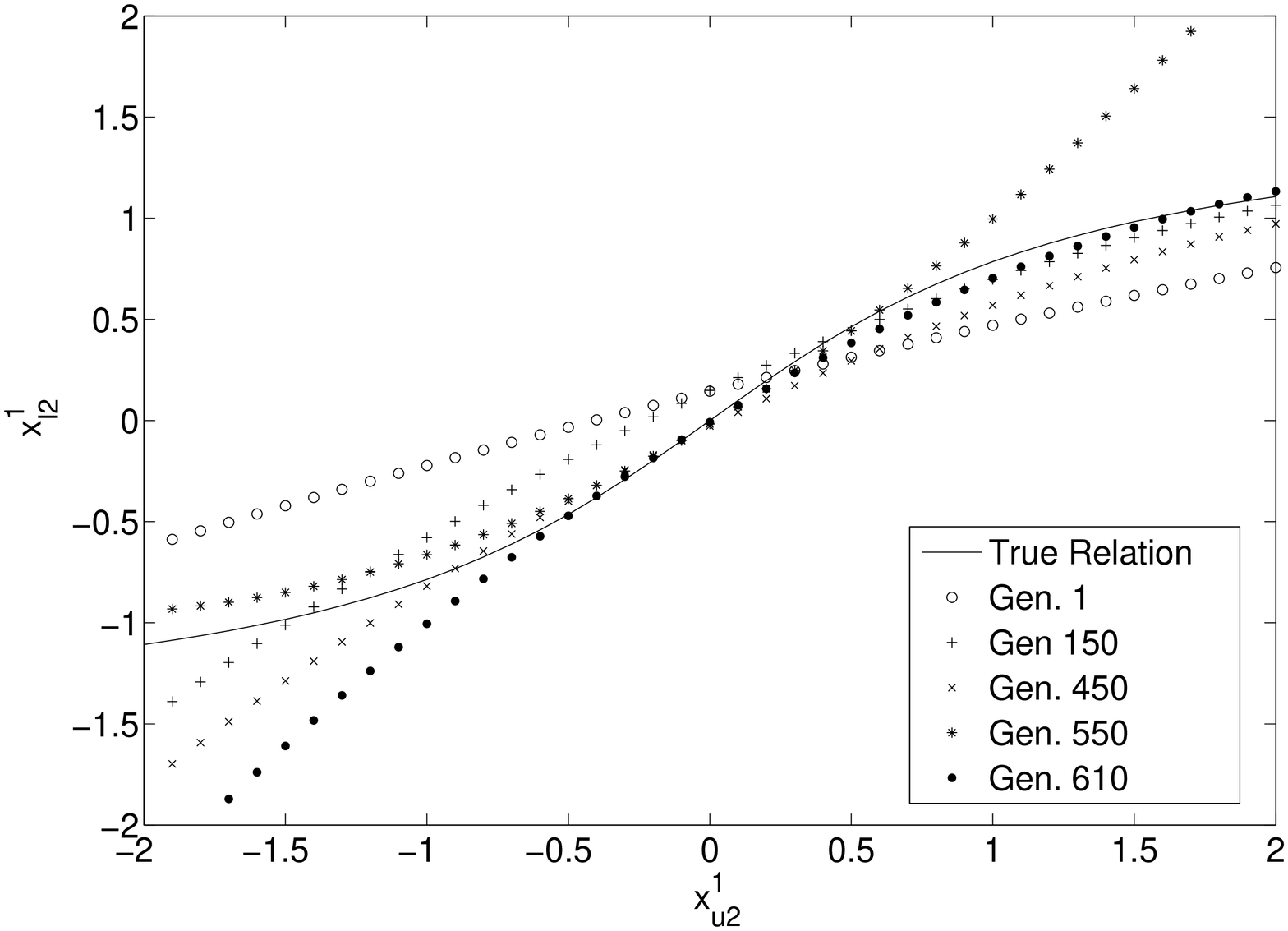,width=0.65\linewidth}
\caption{Quadratic relationship convergence.}
\label{fig:quadApprox}
\end{center}
\end{figure*}

\subsection{Results for constrained test problems}
In this sub-section, we report the results for 10 standard constrained test problems chosen from the literature. We compare our results against the approaches proposed in \cite{wang05,wang11}. The reason for the choice of the two approaches as benchmarks is that both approaches were successful in solving all the chosen constrained test problems. The results obtained using BLEAQ, WJL \cite{wang05} approach, WLD \cite{wang11} approach and nested approach have been furnished in Tables~\ref{tab:sub2table1}, ~\ref{tab:sub2table2} and~\ref{tab:sub2table3}. 

Table~\ref{tab:sub2table1} shows the minimum, median, mean and maximum function evaluations required to solve the chosen problems using the BLEAQ approach. Table~\ref{tab:sub2table2} provides the accuracy obtained at both levels, in terms of absolute difference from the best known solution to a particular test problem.
The table also reports the median and mean of the lower level calls, and the median and mean of lower level function evaluations required per lower level call from 31 runs of the algorithm on each test problem. Finally, Table~\ref{tab:sub2table3} compares the mean function evaluations at the upper and lower levels required by BLEAQ against that required by WJL, WLD and nested approach. In both papers \cite{wang05,wang11}, authors have reported the function evaluations as sum of function evaluations for the two levels. The reported performance metric in the papers is the average function evaluations from multiple runs of their algorithm. We have computed a similar metric for BLEAQ, and the results are reported in the table. It can be observed that both
WJL and WLD require close to an order of magnitude times more function evaluations for most of the test problems. This clearly demonstrates the efficiency gain obtained using the BLEAQ approach. It also suggests that the mathematical insights used along with the evolutionary principles in the BLEAQ approach are helpful in converging quickly towards the bilevel optimal solution.

\begin{table*}[!hbt]
\caption{Function evaluations (FE) for the upper level (UL) and the lower level (LL) from 31
  runs with BLEAQ.} 
\vspace{-1mm}
\label{tab:sub2table1}
{\small\begin{center}
\begin{tabular}{|c|c|c|c|c|c|c|c|c|} \hline
Pr. No.	&	\multicolumn{2}{|c|}{Best Func. Evals.}	&	\multicolumn{2}{|c|}{Median Func. Evals.}	&	\multicolumn{2}{|c|}{Mean Func. Evals.}	&	\multicolumn{2}{|c|}{Worst Func. Evals.}	\\	\cline{2-9}
	&		\multicolumn{1}{|c|}{LL}	&	\multicolumn{1}{|c|}{UL}	&	\multicolumn{1}{|c|}{LL}	&	\multicolumn{1}{|c|}{UL} & \multicolumn{1}{|c|}{LL}	&	\multicolumn{1}{|c|}{UL}	&	\multicolumn{1}{|c|}{LL}	&\multicolumn{1}{|c|}{UL}	\\	\hline
TP1	&	14115	&	718	&	15041	&	780	&	14115.37	&	695.09	&	24658	&	1348	\\	\hline
TP2	&	12524	&	1430	&	14520	&	1434	&	13456.42	&	1314.57	&	16298	&	2586	\\	\hline
TP3	&	4240	&	330	&	4480	&	362	&	3983.47	&	392.24	&	6720	&	518	\\	\hline
TP4	&	14580	&	234	&	15300	&	276	&	15006.41	&	278.72	&	15480	&	344	\\	\hline
TP5	&	10150	&	482	&	15700	&	1302	&	14097.59	&	1305.41	&	15936	&	1564	\\	\hline
TP6	&	14667	&	230	&	17529	&	284	&	16961.32	&	256.74	&	21875	&	356	\\	\hline
TP7	&	234622	&	3224	&	267784	&	4040	&	268812.56	&	4158.62	&	296011	&	5042	\\	\hline
TP8	&	10796	&	1288	&	12300	&	1446	&	10435.71	&	1629.77	&	18086	&	2080	\\	\hline
TP9	&	85656	&	496	&	96618	&	660	&	92843.85	&	672.84	&	107926	&	746	\\	\hline
TP10	&	87722	&	530	&	101610	&	618	&	99754.59	&	602.65	&	114729	&	692	\\	\hline
\end{tabular}
\vspace{-1mm}
\end{center}}
\end{table*}

\begin{table*}[!hbt]
\caption{Accuracy for the upper and lower levels, and the lower level calls from 31
  runs with BLEAQ.} 
\vspace{1mm}
\label{tab:sub2table2}
\begin{center}
\begin{tabular}{|c|c|c|c|c|c|c|c|c|} \hline
Pr. & Median & Median & Median & Mean & Mean & Mean & & \\ \cline{2-7}
	& UL Accuracy & LL Accuracy & LL Calls &  UL Accuracy & LL Accuracy & LL Calls & $\frac{\mbox{Med LL Evals}}{\mbox{Med LL Calls}}$ & $\frac{\mbox{Mean LL Evals}}{\mbox{Mean LL Calls}}$ \\ \hline
TP1	&	0.000000	&	0.000000	&	206	&	0.000000	&	0.000000	&	187.43	&	73.01	&	75.31	\\	\hline
TP2	&	0.012657	&	0.000126	&	235	&	0.013338	&	0.000129	&	239.12	&	61.79	&	56.27	\\	\hline
TP3	&	0.000000	&	0.000000	&	112	&	0.000000	&	0.000000	&	93.47	&	40.00	&	42.62	\\	\hline
TP4	&	0.040089	&	0.007759	&	255	&	0.037125	&	0.007688	&	234.57	&	60.00	&	63.97	\\	\hline
TP5	&	0.008053	&	0.040063	&	203	&	0.008281	&	0.038688	&	205.81	&	77.34	&	68.50	\\	\hline
TP6	&	0.000099	&	0.000332	&	233	&	0.000097	&	0.000305	&	214.71	&	75.23	&	79.00	\\	\hline
TP7	&	0.093192	&	0.093192	&	3862	&	0.089055	&	0.095018	&	3891.08	&	69.34	&	69.08	\\	\hline
TP8	&	0.001819	&	0.000064	&	200	&	0.001968	&	0.000066	&	188.40	&	61.50	&	55.39	\\	\hline
TP9	&	0.000012	&	0.000000	&	623	&	0.000012	&	0.000000	&	598.44	&	155.09	&	155.14	\\	\hline
TP10	&	0.000103	&	0.000000	&	565	&	0.000083	&	0.000000	&	572.35	&	179.84	&	174.29	\\	\hline
\end{tabular}
\vspace{0mm}
\end{center}
\end{table*}

\begin{table*}[!hbt]
\caption{Comparison of BLEAQ against the results achieved by WJL, WLD and nested approach.} 
\vspace{-1mm}
\label{tab:sub2table3}
\begin{center}
\begin{tabular}{|c|c|c|c|c|} \hline
Pr. No.	&	\multicolumn{4}{|c|}{Mean LL Func. Evals.}\\	\cline{2-5}
	&		\multicolumn{1}{c|}{BLEAQ}	&	\multicolumn{1}{c|}{WJL (BLEAQ Savings)}	&	\multicolumn{1}{c|}{WLD (BLEAQ Savings)} & \multicolumn{1}{c|}{Nested (BLEAQ Savings)}	\\ \hline
TP1	&	14810	&	85499	(5.77)	&	86067	(5.81)	&	161204	(10.88)	\\	\hline
TP2	&	14771	&	256227	(17.35)	&	171346	(11.60)	&	242624	(16.43)	\\	\hline
TP3	&	4376	&	92526	(21.15)	&	95851	(21.91)	&	120728	(27.59)	\\	\hline
TP4	&	15285	&	291817	(19.09)	&	211937	(13.87)	&	272843	(17.85)	\\	\hline
TP5	&	15403	&	77302	(5.02)	&	69471	(4.51)	&	148148	(9.62)	\\	\hline
TP6	&	17218	&	163701	(9.51)	&	65942	(3.83)	&	181271	(10.53)	\\	\hline
TP7	&	272971	&	1074742	(3.94)	&	944105	(3.46)	&	864474	(3.17)	\\	\hline
TP8	&	12065	&	213522	(17.70)	&	182121	(15.09)	&	318575	(26.40)	\\	\hline
TP9	&	93517	&	-	-	&	352883	(3.77)	&	665244	(7.11)	\\	\hline
TP10	&	100357	&	-	-	&	463752	(4.62)	&	599434	(5.97)	\\	\hline
\end{tabular}
\end{center}
\vspace{-1mm}
\end{table*}

\section{Conclusions}
In this paper, we have proposed an efficient bilevel evolutionary algorithm (BLEAQ) which works by approximating the optimal solution mapping between the lower level optimal solutions and the upper level variables. The algorithm not only converges towards the optimal solution of the bilevel optimization problem, but also provides the optimal solution mapping at the optimum. This provides valuable information on how the lower level optimal solution changes based on changes in the upper level variable vector in the vicinity of the bilevel optimum. 

The BLEAQ approach has been tested on two sets of test problems. The first set of test problems are recently proposed SMD test problems which are scalable in terms of number of variables. The performance has been evaluated on 10-dimensional instances of these test problems. The second set of test problems are 10 standard test problems chosen from the literature. These problems are constrained and have relatively smaller number of variables. The results obtained using the BLEAQ approach has been compared against the WJL approach, the WLD approach and the nested approach. For both sets, BLEAQ offers significant improvement in the number of function evaluations at both levels. 

\section{Acknowledgments}
Ankur Sinha and Pekka Malo would like to acknowledge the support provided by Liikesivistysrahasto and Helsinki School of Economics foundation. Authors of the paper would like to acknowledge the conversations with Prof. Yu-Ping Wang.


\begin{thebibliography}{10}

\bibitem{aiyoshi81}
E.~Aiyoshi and K.~Shimizu.
\newblock Hierarchical decentralized systems and its new solution by a barrier
  method.
\newblock {\em IEEE Transactions on Systems, Man, and Cybernetics},
  11:444--449, 1981.

\bibitem{aiyoshi84}
E.~Aiyoshi and K.~Shimuzu.
\newblock A solution method for the static constrained stackelberg problem via
  penalty method.
\newblock {\em IEEE Transactions on Automatic Control}, AC-29(12):1112--1114,
  1984.

\bibitem{amouzegar99}
M.~A. Amouzegar.
\newblock A global optimization method for nonlinear bilevel programming
  problems.
\newblock {\em IEEE Transactions on Systems, Man, and Cybernetics, Part B},
  29(6):771--777, 1999.

\bibitem{angelo13}
J.~Angelo, E.~Krempser, and H.~Barbosa.
\newblock Differential evolution for bilevel programming.
\newblock In {\em Proceedings of the 2013 Congress on Evolutionary Computation
  (CEC-2013)}. IEEE Press, 2013.

\bibitem{bard83}
J.~F. Bard.
\newblock Coordination of multi-divisional firm through two levels of
  management.
\newblock {\em Omega}, 11(5):457--465, 1983.

\bibitem{bard-book98}
J.~F. Bard.
\newblock {\em Practical Bilevel Optimization: Algorithms and Applications}.
\newblock Kluwer Academic Publishers, 1998.

\bibitem{bard98}
J.F. Bard.
\newblock {\em Practical Bilevel Optimization: Algorithms and Applications}.
\newblock The Netherlands: Kluwer, 1998.

\bibitem{bard82}
J.F. Bard and J.~Falk.
\newblock An explicit solution to the multi-level programming problem.
\newblock {\em Computers and Operations Research}, 9:77--100, 1982.

\bibitem{bianco-kkt}
L.~Bianco, M.~Caramia, and S.~Giordani.
\newblock A bilevel flow model for hazmat transportation network design.
\newblock {\em Transportation Research Part C: Emerging technologies},
  17(2):175--196, 2009.

\bibitem{brotcorne01}
Luce Brotcorne, Martine Labbe, Patrice Marcotte, and Gilles Savard.
\newblock A bilevel model for toll optimization on a multicommodity
  transportation network.
\newblock {\em Transportation Science}, 35(4):345--358, 2001.

\bibitem{colson}
B.~Colson, P.~Marcotte, and G.~Savard.
\newblock An overview of bilevel optimization.
\newblock {\em Annals of Operational Research}, 153:235--256, 2007.

\bibitem{constantin95}
Isabelle Constantin and Michael Florian.
\newblock Optimizing frequencies in a transit network: a nonlinear bi-level
  programming approach.
\newblock {\em International Transactions in Operational Research}, 2(2):149 --
  164, 1995.

\bibitem{debpenalty}
K.~Deb.
\newblock An efficient constraint handling method for genetic algorithms.
\newblock {\em Computer Methods in Applied Mechanics and Engineering},
  186(2--4):311--338, 2000.

\bibitem{my-ecj10}
K.~Deb and A.~Sinha.
\newblock An efficient and accurate solution methodology for bilevel
  multi-objective programming problems using a hybrid evolutionary-local-search
  algorithm.
\newblock {\em Evolutionary Computation Journal}, 18(3):403--449, 2010.

\bibitem{dempe2003}
S.~Dempe.
\newblock Annotated bibliography on bilevel programming and mathematical
  programs with equilibrium constraints.
\newblock {\em Optimization}, 52(3):339--359, 2003.

\bibitem{dempe-dutta}
S.~Dempe, J.~Dutta, and S.~Lohse.
\newblock Optimality conditions for bilevel programming problems.
\newblock {\em Optimization}, 55(5–6):505–--524, 2006.

\bibitem{dempe97}
S.~Dempe and S.~Vogel.
\newblock The subdifferential of the optimal solution in parametric
  optimization.
\newblock Technical report, Fakultat fur Mathematik und Informatik, TU
  Bergakademie, 1997.

\bibitem{dempe02}
Stephan Dempe.
\newblock {\em {Foundations of Bilevel Programming}}.
\newblock Kluwer Academic Publishers, Secaucus, NJ, USA, 2002.

\bibitem{dempe09}
Stephan Dempe.
\newblock Bilevel programming: Implicit function approach.
\newblock In Christodoulos~A. Floudas and Panos~M. Pardalos, editors, {\em
  Encyclopedia of Optimization}, pages 260--266. Springer, 2009.

\bibitem{dontchev09}
A.L. Dontchev and R.T. Rockafellar.
\newblock {\em Implicit Functions and Solution Mappings: A View from
  Variational Analysis}.
\newblock Springer, 1st edition, June 2009.

\bibitem{fudenberg93}
D.~Fudenberg and J.~Tirole.
\newblock {\em Game theory}.
\newblock MIT Press, 1993.

\bibitem{halter-sanaz}
W.~Halter and S.~Mostaghim.
\newblock Bilevel optimization of multi-component chemical systems using
  particle swarm optimization.
\newblock In {\em Proceedings of World Congress on Computational Intelligence
  (WCCI-2006)}, pages 1240--1247, 2006.

\bibitem{hansen92}
P.~Hansen, B.~Jaumard, and G.~Savard.
\newblock New branch-and-bound rules for linear bilevel programming.
\newblock {\em SIAM Journal on Scientific and Statistical Computing},
  13(5):1194--1217, 1992.

\bibitem{bilevel-KKT1}
J.~Herskovits, A.~Leontiev, G.~Dias, and G.~Santos.
\newblock Contact shape optimization: A bilevel programming approach.
\newblock {\em Struct Multidisc Optimization}, 20:214--221, 2000.

\bibitem{kirjnerneto98}
C.~Kirjner-Neto, E.~Polak, and A.Der Kiureghian.
\newblock An outer approximations approach to reliability-based optimal design
  of structures.
\newblock {\em Journal of Optimization Theory and Applications}, 98(1):1--16,
  1998.

\bibitem{legillon12}
Francois Legillon, Arnaud Liefooghe, and El-Ghazali Talbi.
\newblock Cobra: A cooperative coevolutionary algorithm for bi-level
  optimization.
\newblock In {\em 2012 IEEE Congress on Evolutionary Computation (CEC-2012)}.
  IEEE Press, 2012.

\bibitem{li07a}
Hecheng Li and Yuping Wang.
\newblock A genetic algorithm for solving a special class of nonlinear bilevel
  programming problems.
\newblock In Yong Shi, GeertDick Albada, Jack Dongarra, and PeterM.A. Sloot,
  editors, {\em Proceedings of the 7th international conference on
  Computational Science, Part IV: ICCS 2007}, volume 4490 of {\em Lecture Notes
  in Computer Science}, pages 1159--1162. Springer Berlin Heidelberg, 2007.

\bibitem{li07b}
Hecheng Li and Yuping Wang.
\newblock A hybrid genetic algorithm for solving nonlinear bilevel programming
  problems based on the simplex method.
\newblock {\em International Conference on Natural Computation}, 4:91--95,
  2007.

\bibitem{li11}
Hecheng Li and Yuping Wang.
\newblock An evolutionary algorithm with local search for convex quadratic
  bilevel programming problems.
\newblock {\em Applied Mathematics and Information Sciences}, 5(2):139--146,
  2011.

\bibitem{li06}
Xiangyong Li, Peng Tian, and Xiaoping Min.
\newblock A hierarchical particle swarm optimization for solving bilevel
  programming problems.
\newblock In Leszek Rutkowski, Ryszard Tadeusiewicz, LotfiA. Zadeh, and JacekM.
  Żurada, editors, {\em Artificial Intelligence and Soft Computing -- ICAISC
  2006}, volume 4029 of {\em Lecture Notes in Computer Science}, pages
  1169--1178. Springer Berlin Heidelberg, 2006.

\bibitem{liu98}
B.~D. Liu.
\newblock StackelbergÐnash equilibrium for multi-level programming with
  multiple followers using genetic algorithms.
\newblock {\em Computers and Mathematics with Applications}, 36(7):79--89,
  1998.

\bibitem{mathieu}
R.~Mathieu, L.~Pittard, and G.~Anandalingam.
\newblock Genetic algorithm based approach to bi-level linear programming.
\newblock {\em Operations Research}, 28(1):1--21, 1994.

\bibitem{migdalas95}
Athanasios Migdalas.
\newblock Bilevel programming in traffic planning: Models, methods and
  challenge.
\newblock {\em Journal of Global Optimization}, 7(4):381--405, 1995.

\bibitem{oduguwa-roy}
V.~Oduguwa and R.~Roy.
\newblock Bi-level optimization using genetic algorithm.
\newblock In {\em Proceedings of the 2002 IEEE International Conference on
  Artificial Intelligence Systems (ICAIS'02)}, pages 322--327, 2002.

\bibitem{outrata90}
J.~V. Outrata.
\newblock On the numerical solution of a class of stackelberg problems.
\newblock {\em Zeitschrift Fur Operation Research}, AC-34(1):255Ð278, 1990.

\bibitem{ruuska12}
S.~Ruuska and K.~Miettinen.
\newblock Constructing evolutionary algorithms for bilevel multiobjective
  optimization.
\newblock In {\em 2012 IEEE Congress on Evolutionary Computation (CEC-2012)},
  2012.

\bibitem{shi-xia}
X.~Shi. and H.~S. Xia.
\newblock Model and interactive algorithm of bi-level multi-objective
  decision-making with multiple interconnected decision makers.
\newblock {\em Journal of Multi-Criteria Decision Analysis}, 10(1):27--34,
  2001.

\bibitem{shimizu81}
K.~Shimizu and E.~Aiyoshi.
\newblock A new computational method for stackelberg and min-max problems by
  use of a penalty method.
\newblock {\em IEEE Transactions on Automatic Control}, AC-26(2):460--466,
  1981.

\bibitem{my-ifac09}
A.~Sinha and K.~Deb.
\newblock Towards understanding evolutionary bilevel multi-objective
  optimization algorithm.
\newblock In {\em IFAC Workshop on Control Applications of Optimization
  (IFAC-2009)}, volume~7. Elsevier, 2009.

\bibitem{my-cec12a}
A.~Sinha, P.~Malo, and K.~Deb.
\newblock Unconstrained scalable test problems for single-objective bilevel
  optimization.
\newblock In {\em 2012 IEEE Congress on Evolutionary Computation (CEC-2012)}.
  IEEE Press, 2012.

\bibitem{my-cec13}
A.~Sinha, P.~Malo, A.~Frantsev, and K.~Deb.
\newblock Multi-objective stackelberg game between a regulating authority and a
  mining company: A case study in environmental economics.
\newblock In {\em 2013 IEEE Congress on Evolutionary Computation (CEC-2013)}.
  IEEE Press, 2013.

\bibitem{my-cec06}
A.~Sinha, A.~Srinivasan, and K.~Deb.
\newblock A population-based, parent centric procedure for constrained
  real-parameter optimization.
\newblock In {\em 2006 IEEE Congress on Evolutionary Computation (CEC-2006)},
  pages 239--245. IEEE Press, 2006.

\bibitem{my-caor13}
Ankur Sinha, Pekka Malo, Anton Frantsev, and Kalyanmoy Deb.
\newblock Finding optimal strategies in a multi-period multi-leader-follower
  stackelberg game using an evolutionary algorithm.
\newblock {\em Computers \& Operations Research}, 2013.

\bibitem{smith82}
W.R. Smith and R.W. Missen.
\newblock {\em Chemical Reaction Equilibrium Analysis: Theory and Algorithms}.
\newblock John Wiley \& Sons, New York, 1982.

\bibitem{sun08}
Huijun Sun, Ziyou Gao, and Jianjun Wu.
\newblock A bi-level programming model and solution algorithm for the location
  of logistics distribution centers.
\newblock {\em Applied Mathematical Modelling}, 32(4):610 -- 616, 2008.

\bibitem{vicente-review}
L.~N. Vicente and P.~H. Calamai.
\newblock Bilevel and multilevel programming: {A} bibliography review.
\newblock {\em Journal of Global Optimization}, 5(3):291--306, 2004.

\bibitem{vicente94}
L.~N. Vicente, G.~Savard, and J.~J. Judice.
\newblock Descent approaches for quadratic bilevel programming.
\newblock {\em Journal of Optimization Theory and Applications},
  81(1):379--399, 1994.

\bibitem{stackelbergWang01}
F.~J. Wang and J.~Periaux.
\newblock Multi-point optimization using gas and {Nash/Stackelberg} games for
  high lift multi-airfoil design in aerodynamics.
\newblock In {\em Proceedings of the 2001 Congress on Evolutionary Computation
  (CEC-2001)}, pages 552--559, 2001.

\bibitem{GA_Wang}
G.~Wang, Z.~Wan, X.~Wang, and Y.~Lv.
\newblock Genetic algorithm based on simplex method for solving
  linear-quadratic bilevel programming problem.
\newblock {\em Comput Math Appl}, 56(10):2550--2555, 2008.

\bibitem{wang05}
Y.~Wang, Y.~C. Jiao, and H.~Li.
\newblock An evolutionary algorithm for solving nonlinear bilevel programming
  based on a new constraint-handling scheme.
\newblock {\em IEEE Transaction on Systems, Man and CyberneticsÑPart C:
  Applications and Reviews}, 32(2):221--232, 2005.

\bibitem{wang11}
Yuping Wang, Hong Li, and Chuangyin Dang.
\newblock A new evolutionary algorithm for a class of nonlinear bilevel
  programming problems and its global convergence.
\newblock {\em INFORMS Journal on Computing}, 23(4):618--629, 2011.

\bibitem{yin-bilevel}
Y.~Yin.
\newblock Genetic algorithm based approach for bilevel programming models.
\newblock {\em Journal of Transportation Engineering}, 126(2):115--120, 2000.

\bibitem{zhang12}
T.~Zhang, T.~Hu, Y.~Zheng, and X.~Guo.
\newblock An improved particle swarm optimization for solving bilevel
  multiobjective programming problem.
\newblock {\em Journal of Applied Mathematics}, 2012.

\end{thebibliography}

\end{document}